\newtheorem{theorem}{Theorem}
\title{Federated Learning with Fair Averaging}
\author{
% Paper ID: 397}

    Zheng Wang$^1$\and
    Xiaoliang Fan$^{1,}$\footnote{Corresponding Author}\and 
    Jianzhong Qi$^2$\and\\
    Chenglu Wen$^1$\and
    Cheng Wang $^1$\And
    Rongshan Yu $^1$
    \affiliations
    $^{1}$Fujian Key Laboratory of Sensing and Computing for Smart Cities, School of Informatics, Xiamen University, Xiamen, China\\
    $^2$School of Computing and Information Systems, University of Melbourne, Melbourne, Australia
    \emails
    zwang@stu.xmu.edu.cn,\\
    fanxiaoliang@xmu.edu.cn,
    jianzhong.qi@unimelb.edu.au,
    \{clwen, cwang, rsyu\}@xmu.edu.cn
    % cwang@xmu.edu.cn,
    % rsyu@xmu.edu.cn
    }
\begin{document}

\maketitle

\begin{abstract}
    Fairness has emerged as a critical problem in federated learning (FL). In this work, we identify a cause of unfairness in FL -- \emph{conflicting} gradients with large differences in the magnitudes. To address this issue, we propose the federated fair averaging (FedFV) algorithm to mitigate potential conflicts among clients before averaging their gradients. We first use the cosine similarity to detect gradient conflicts, and then iteratively eliminate such conflicts by modifying both the direction and the magnitude of the gradients. We further show the theoretical foundation of FedFV to mitigate the issue conflicting gradients and converge to Pareto stationary solutions. Extensive  experiments on a suite of federated datasets confirm that FedFV compares favorably against state-of-the-art methods in terms of fairness, accuracy and efficiency. The source code is available at \url{https://github.com/WwZzz/easyFL}.
\end{abstract}

\section{Introduction}
    Federated learning (FL) has emerged as a new machine learning paradigm that aims to utilize clients' data to collaboratively train a global model while preserving data privacy \cite{fedavg}. The global model is expected to perform better than any locally trained model, since it has much more data available for the training. However, it is difficult for the global model to treat each client fairly \cite{advance,qfedavg,afl}. For example, a global model for face recognition may work well for younger users (clients) but may suffer when being used by more senior users, as the younger generation may use mobile devices more frequently and contribute more training data. 
    % Unfortunately, heterogeneous networks may disproportionately advantage or disadvantage the model performance on some of the clients, causing a unfair accuracy distribution across devices \cite{qfedavg}.
    
     Techniques have been proposed to address the fairness issue in FL. AFL \cite{afl} aims at good-intent fairness, which is to protect the worst-case performance on any client. This technique only works on a small network of dozens of clients because it directly treats each client as a domain, which may suffer in generalizability. Li \emph{et al.} \shortcite{qfedavg} seek to balance the overall performance and fairness using a fair resource allocation method. Hu \emph{et al.} \shortcite{fedmgda} find a common degrade direction for all clients without sacrificing anyone's benefit and demonstrate robustness to inflating loss attacks. Li \emph{et al.} \shortcite{fedmultitaskcc} further explore a trade-off between a more general robustness and fairness, and they personalize each client's model differently.  
    \begin{figure}
    \centering
    \includegraphics[scale=0.28]{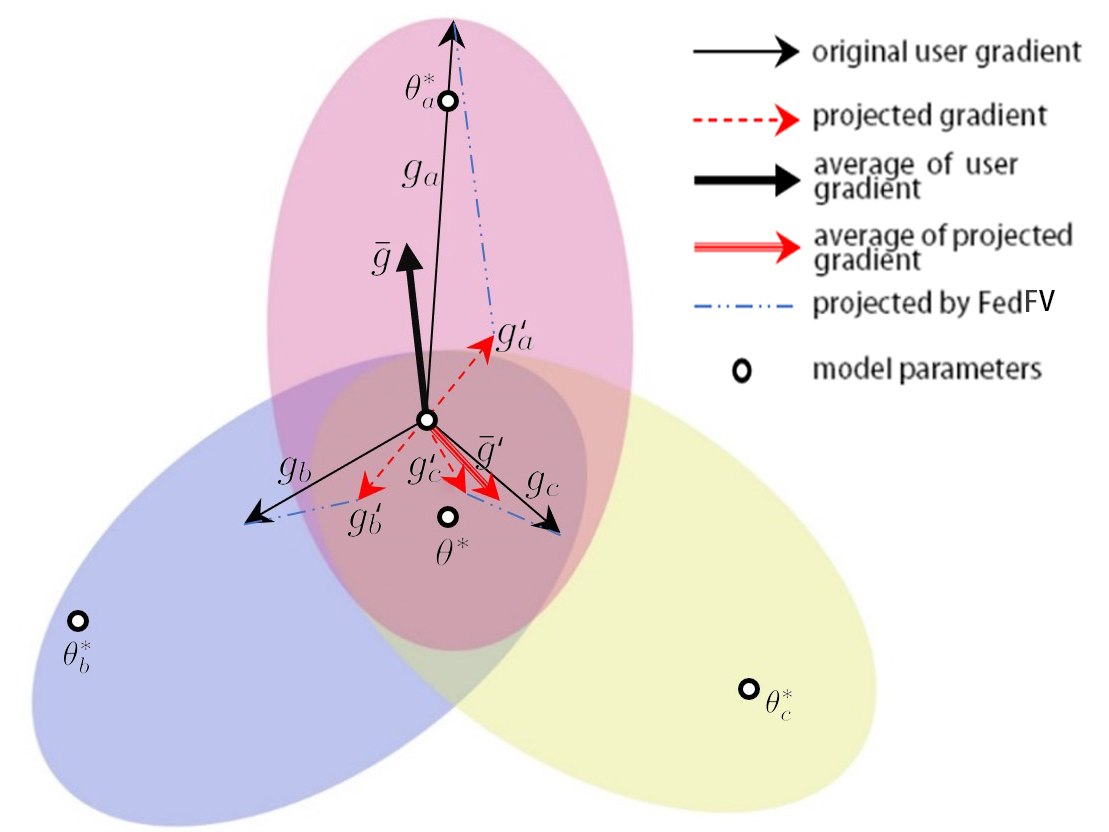}
    \caption{When there are conflicting gradients ($g_a\cdot g_b\textless 0,g_a\cdot g_c\textless 0,g_b\cdot g_c\textless 0$) and large differences in the gradient magnitudes, $||g_a||\textgreater||g_b|| \textgreater||g_c||$, the average of the original gradients $\bar{\theta}$ will be dominated by $g_a$, which will be far away from the global optimal $\theta^*$, resulting in unfairness to the dominated clients $b$ and $c$. We argue that if projecting the gradients to mitigate the conflicts before averaging, the update direction $\bar{g}'$ will be corrected, which is closer to the optimal and fairer.} \label{fig1}
    \end{figure}
    
    Different from these studies, we observe that \emph{conflicting} gradients with large differences in the magnitudes might fatally bring unfairness in FL. Since  gradients are calculated locally by the clients, an update direction of some clients may hinder the model performance on other clients. The gradient conflicts seem not detrimental on their own since it has been shown that a simple average of gradients is valid to optimize the global objective function \cite{ontheconvergence}. However, when the conflicting gradients have large differences in their magnitudes, model accuracy can suffer significant reductions for some of the clients, as shown in Figure 1. 
    
    We highlight three characteristics of FL that make it critical in ensuring fairness for the clients.

    \begin{itemize}
    \item \textbf{Heterogeneous and imbalanced data}. Different clients have their own data characteristics and hence potentially different data distributions. When the data is non-IID distributed, the divergence among the gradients of difference clients gets even larger \cite{fedwithnoniid}. These contribute conflicting gradients in FL. Further, different clients may also have different dataset sizes. If the same local batch size is used across the clients, a client with more data will take more steps to train the local model, thus leading to large differences in gradient magnitudes. As a result, the selected users in a communication round can largely conflict with each other, leading to an unfair update to some of them -- we call such conflicts \emph{internal conflicts}.
    \item \textbf{Party selection}. In each communication round, the server only randomly selects a subset of clients to train the global model with their local datasets. There is no guarantee that the data distribution in each round is representative of the real population distribution. The data distribution can be highly imbalanced when the server repeatedly chooses a particular type of clients, e.g., due to sampling strategies based on computing power, latency, battery etc., the average of gradients may become dominated by the repeatedly selected clients. Therefore, apart from the internal conflicts, we also identify \emph{external conflicts} between those selected and those ignored by the server.
    \item \textbf{Client dropping out}. Another source of \emph{external conflicts} comes from network interruption. When a client drops out due to network interruption (or other hardware issues), its magnitude is considered as 0. An update in the global model may also be unfair for the client. 
    \end{itemize}
    
    To address the issues above, we propose \textbf{Fed}erated \textbf{F}air a\textbf{V}eraging (\textbf{FedFV}) algorithm to mitigate the conflicts among clients before averaging their gradients. We first use the cosine similarity to detect gradient conflicts and then iteratively eliminate such conflicts by modifying both the direction and magnitude of the gradients.  We further show how well FedFV can mitigate the conflicts and how it converges to either a pareto stationary solution or the optimal on convex problems. Extensive experiments on a suite of federated datasets confirm that FedFV compares favorably against state-of-the-art methods in terms of fairness, accuracy and efficiency.
    
    The contributions of this work are summarized as follows:
    \begin{itemize}
    \item We identify two-fold gradients conflicts (i.e., internal conflict and external conflict) that are major causes of unfairness in FL.
    \item We propose FedFV that complements existing fair FL systems and prove its ability to mitigate two gradient conflicts and converge to Pareto stationary solutions.
    \item We perform extensive experiments on a suite of federated datasets to validate the competitiveness of FedFV against state-of-the-art methods in terms of fairness, accuracy and efficiency.
\end{itemize}
\section{Preliminaries}
    FL aims to find a shared parameter $\theta$ that 
    minimizes the weighted average loss of all clients \cite{fedavg}:
    \begin{align}
        \min_\theta F(\theta) =& \sum_{k=1}^K p_k F_k(\theta)
    \end{align}%
    where $F_k(\theta)$ denotes the local objective of the $k$th client 
    with weight $p_k$, $p_k\ge0$ and $\sum_{k=1}^K p_k=1$. 
    The local objective is usually defined by an empirical risk on a local 
    dataset, i.e., $F_k(\theta)=\frac 1{n_k}\sum_{\xi_i\in D_k} l(\theta,\xi_i)$, 
    where $D_k$ denotes the local dataset of the $k$th client, and $n_k$ is the 
    size of $D_k$. To optimize this objective, McMahan \emph{et al.} \shortcite{fedavg} propose \emph{FedAvg}, an iterative algorithm where the server randomly samples a 
    subset $S_t$ of $m$ clients, $0\textless m\le K$, to train the global model with their own datasets 
    and aggregates local updates $G_t=\{g_1^t,g_2^t,...,g_m^t\}$ into an average $g^t=\sum_{k=1}^m p_k g_k^t$ with weight $p_k=\frac{n_k}{\sum_{j=1}^{m}n_j}$ in the $t$th iteration. 
    FedAvg has been shown to be effective in minimizing the objective with low communication 
    costs while preserving privacy. However, it may lead to an unfair result 
    where accuracy distribution is imbalanced among different clients 
    \cite{qfedavg,afl}.
    
    In this paper, we aim to minimize Equation (1) while achieving a fair accuracy distribution among the clients. 

% \section{Motivation}

% In FL, a basic assumption is that the data is non-independent-and-identically-distributed (non-I.I.D.) across devices. 
% A lot of techniques have been proposed to address the above problems in FL respectively. However, we are the first to build a connection between the characters in FL and the gradient conflicts with different gradient magnitudes. 
\section{Proposed Approach}
We first present our approach, FedFV, for easing the negative impact of conflicting gradients with largely different magnitudes on FL fairness. We then give a theoretical analysis on how well FedFV achieves fairness and its convergence. We start by defining gradient conflicts following Yu et al. \shortcite{g_surgery}.
\newtheorem{definition}{Definition}
\begin{definition}
Client $i$'s gradient $g_i\in R^d$ conflicts with client $j$'s gradient $g_j\in R^d$ iff $g_i\cdot g_j\textless 0$.
\end{definition}
\subsection{FedFV}
FedFV aims to mitigate the conflicts among clients before averaging their gradients. In the $t$th communication round of FedFV, after receiving the selected clients' updates $G_t=\{g_1^t,g_2^t,...,g_m^t \}$ and training losses $L_t=\{l_1^t,l_2^t,...,l_m^t\}$, the server updates the gradient history $GH$ to keep a trace of the latest gradient of each client. Then, the server sorts gradients in $G_t$ in ascending order of their corresponding clients' losses to obtain $PO_t$, which provides the order of each gradient to be used as a projection target. Finally, FedFV mitigates the internal conflicts and the external conflicts sequentially. Algorithm \ref{alg:1} summarizes the steps of FedFV. 
    \begin{algorithm}[tb]
    \caption{FedFV}
    \label{alg:1}
    \textbf{Input}:$T,m,\alpha,\tau,\eta,\theta^0,p_k,k=1,...,K,$\\
    \begin{algorithmic}[1]
    \STATE Initialize $\theta_0$ and gradient history $GH=[]$.
    \FOR{$t = 0,1,...,T-1$}
    \STATE Server samples a subset $S_t$ of $m$ clients with the prob. $p_k$ and sends the model $\theta^t$ to them.
    \STATE Server receives the updates $g_k^t$ and the training loss $l_k^t$ from each client $k\in S_t$, where $g_k^t\leftarrow\theta^t-\theta_k^t$ and $\theta_k^t$ is updated by client $k$. Then server updates the trace $GH=[g_1^{t_1},g_2^{t_2},...,g_K^{t_K}]$ of the latest updates of all clients.
    \STATE Server sorts the clients' updates into a projecting order list $PO_t=[g_{i_1}^t,...,g_{i_m}^t]$ based on their losses, where $l_{i_j}^t\le l_{i_{j+1}}^t$.
    \STATE $g^t\leftarrow $MitigateInternalConflict$(PO_t,G_t,\alpha)$
    \IF{$t\ge \tau$}
    \STATE $g^t\leftarrow $MitigateExternalConflict$(g^t, GH, \tau)$
    \ENDIF
    % \STATE $g^t\leftarrow $Mitigateexternal Conflict$(g^t, GH, \tau)$
    \STATE $g^t=g^t/||g^t||*||\frac 1m\sum_k^m g_k^t||$
    \STATE Server updates the model $\theta^{t+1}\leftarrow \theta^t-g^t$
    \ENDFOR
    \end{algorithmic}
    \end{algorithm}

\subsection{Mitigating Internal Conflicts}
We first handle the fairness among the selected clients in each communication round. 
\begin{definition}
In the $t$th communication round, there are internal conflicts if there is at least a pair of client gradients $<g_i^t,g_j^t>$ such that $g_i^t$ conflicts with $g_j^t$, where $g_i^t,g_j^t \in G_t=\{g_1^t,g_2^t,...,g_m^t\}$ and $G^t$ is the selected clients' updates. 
\end{definition}
Internal conflicts account for the unfairness among selected clients. Consider learning a binary classifier. if clients with data of one class are in the majority in a communication round and there exist conflicts between gradients of these two classes, the global model will be updated to favor clients of the majority class, sacrificing the accuracy on clients of the other class. Further, even when the proportion of clients of the two classes is balanced, the magnitudes of gradients may still largely differ due to different dataset sizes on different clients. 

To address the internal conflicts, FedFV iteratively projects a client's gradient onto the normal plane of another client with a conflicting gradient. Instead of projecting in a random order, we design a loss-based order to decide which gradient should be the projecting target of other gradients based on Theorem \ref{th-1}. We show that the later a client gradient is used as the projection target, the fewer conflicts it will have with the final average gradient computed by FedFV. Algorithm \ref{alg:2} summarizes our steps to mitigate internal conflicts. 
\begin{theorem}\label{th-1}
Suppose there is a set of gradients $G=\{g_1,g_2,...,g_m\}$ where $g_i$ always conflicts with $g_j^{(t_j)}$ before projecting $g_j^{(t_j)}$ to $g_i$'s normal plane, and $g_i^{(t_i)}$ is obtained by projecting $g_i$ to the normal planes of other gradients in $G$ for $t_i$ times. Assuming that $|\cos{<g_i^{(t_i)},g_j^{(t_j)}>}|\le \epsilon, 0\textless\epsilon\le 1 $, for each $g_i \in G$, as long as we iteratively project $g_i$ onto $g_k$'s normal plane (skipping $g_i$ it self) in the ascending order of $k$ where $k=1,2,...,m$, the larger $k$ is, the smaller the upper bound of conflicts between the average $\bar{g}'=\frac 1m\sum_{i=1}^m g_i^{(m)}$ and $g_k$ is.
\end{theorem}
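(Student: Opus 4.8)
The plan is to control, for each projection target $g_k$, how far the inner product $g_k\cdot g_i^{(m)}$ can drift away from its value at the moment $g_i$ is projected onto $g_k$'s normal plane, and then to use the fact that, since the targets are swept in increasing order of $k$, the number of projection steps that still act on $g_i$ after that moment decreases with $k$. Averaging the per-client estimates over $i$ then yields an upper bound on the conflict between $\bar g'$ and $g_k$ whose only $k$-dependent part is this (decreasing) step count, which gives the claim.

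First I would isolate the effect of a single projection step. When the current iterate $v$ of $g_i$ is replaced by $v' = v - \tfrac{v\cdot u}{\|u\|^2}\,u$ (the hypothesis guarantees every step is a genuine projection against a conflicting target $u$), the displacement $\Delta = v'-v$ satisfies $\|\Delta\| = |\cos\langle v,u\rangle|\,\|v\| \le \epsilon\|v\|$ by the standing assumption, and $\|v'\|^2 = \|v\|^2 - \|\Delta\|^2 \le \|v\|^2$. Hence every intermediate gradient has norm at most $M := \max_j\|g_j\|$, each step moves the iterate by at most $\epsilon M$, and immediately after the step $v'\cdot u = 0$.

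Second, fix a client $i$ and a target index $k\neq i$. Let $v_k$ be the form of $g_k$ serving as the target when $g_i$'s sweep reaches index $k$, and let $s$ be the step just after that projection, so $g_i^{(s)}\cdot v_k = 0$. Because targets are handled in the order $1,2,\dots,m$ (skipping $i$), the number of steps still applied to $g_i$ after step $s$ is at most $m-k$, so $\|g_i^{(m)}-g_i^{(s)}\|\le (m-k)\,\epsilon M$. Using $g_k\cdot g_i^{(s)} = (g_k-v_k)\cdot g_i^{(s)}\ge -\|g_k-v_k\|\,M$, this gives
\[ g_k\cdot g_i^{(m)} \;\ge\; -\|g_k-v_k\|\,M - (m-k)\,\epsilon M^2 . \]
The residual $\|g_k-v_k\|$ is bounded in the same fashion by $\epsilon M$ times the number of projections $g_k$ had itself undergone by step $s$, a count that is $O(m)$ and, in the simplest reading where the original $g_k$ is used as target, is zero. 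So $g_k\cdot g_i^{(m)}\ge -\varphi(k)\,\epsilon M^2$ with $\varphi$ non-increasing in $k$.

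Finally I would average over $i$. Writing $\bar g'\cdot g_k = \tfrac1m\, g_k\cdot g_k^{(m)} + \tfrac1m\sum_{i\neq k}g_k\cdot g_i^{(m)}$, each cross term is $\ge -\varphi(k)\,\epsilon M^2$, while $g_k\cdot g_k^{(m)} \ge (1-\epsilon^2)^{m-1}\|g_k\|^2 - (m-1)\,\epsilon M^2$ does not depend on how $k$ is ordered relative to the other clients. Hence $-\bar g'\cdot g_k$, and thus the conflict magnitude $\big(-\bar g'\cdot g_k\big)^+$, is at most a $k$-independent constant plus $\tfrac{m-1}{m}\varphi(k)\,\epsilon M^2$, which is non-increasing in $k$ --- the assertion. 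The main obstacle is the bookkeeping forced by the targets evolving during the procedure (the $g_j^{(t_j)}$ of the hypothesis): being orthogonal to $v_k$ is not literally being orthogonal to $g_k$, so the $\epsilon$-drift bounds must also be propagated to the targets and shown not to accumulate beyond order $m\,\epsilon M^2$, and one has to count precisely how many post-projections survive for each pair $(i,k)$ --- the skipped index $i$ shifting the count by one --- which is tedious but routine.
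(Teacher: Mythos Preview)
Your plan follows the same skeleton as the paper's proof: use that, at the moment $g_k$ serves as the projection target, the inner products with $g_k$ are controlled, then bound the drift produced by the remaining projections; since there are at most $m-k$ of those, the bound shrinks with $k$. Two differences are worth flagging. First, by bounding each post-step displacement only in norm ($\|\Delta\|\le\epsilon\|v\|$) and then applying Cauchy--Schwarz against $g_k$, you land on a bound linear in $\epsilon$. The paper keeps track of the \emph{direction} of each displacement---the $(j{+}1)$-th step moves along $g_{j+1}/\|g_{j+1}\|$---so dotting with $g_k$ produces a second cosine factor $\cos\phi_{k,j+1}$ and hence the sharper $\epsilon^2$ bound
\[
|e_k\cdot\bar g'|\le\frac{\epsilon^2}{m}\sum_{j=k}^{m-1}\sum_{i\ne j+1}\|g_i^{(j)}\|,
\]
whose monotonicity in $k$ is immediate from the outer sum. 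This $\epsilon^2$ scaling is precisely what Theorem~\ref{th-2} later exploits, so it is worth recovering.

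Second, you separate out the diagonal $g_k\cdot g_k^{(m)}$ and call it ``position-independent'', but your stated lower bound $(1-\epsilon^2)^{m-1}\|g_k\|^2-(m-1)\epsilon M^2$ conflates a norm estimate with an inner-product estimate and still carries $\|g_k\|$, so it is not a $k$-independent constant. The paper sidesteps this by working with the aggregate from the start: it unrolls $\bar g'=\tfrac1m\sum_i g_i^{(m)}$ back to $\tfrac1m\sum_i g_i^{(k)}$ and invokes $g_k\cdot\sum_i g_i^{(k)}\ge 0$ (the off-diagonal summands have just been projected to zero; the diagonal one is the inner product of $g_k$ with its own partially projected version), so the entire $k$-dependence sits in the correction sum $\sum_{j=k}^{m-1}(\cdots)$. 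Finally, your concern about ``evolving targets'' $v_k\ne g_k$ is moot: both the algorithm and the paper's update rules project onto the normal planes of the \emph{original} $g_k$, so $v_k=g_k$ and that bookkeeping disappears.
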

\begin{proof}
% See Appendix \ref{A.1}.
See Appendix A.1.
\end{proof}
% Then we have the expression of the weight $p_i$ of each gradient $g_i$ in $\bar{g}'$
% \begin{equation}
% \begin{split}
% p_i &= 1-\frac 1{||g_i||}\sum_{k=1,k\ne i}^m ||g_k^{(i-1)}||\cos{<g_k^{(i-1)},g_i>}\\
%     &\ge 1+\frac \epsilon{||g_i||}\sum_{k=1,k\ne i}^m ||g_k^{(i-1)}||
% \end{split}
% \end{equation}

% \begin{equation}
% \begin{split}

% \end{split}
% \end{equation}
% The gain of each client's weight $\delta p_i=\frac \epsilon{||g_i||}\sum_{k=1,k\ne i}^m ||g_k^{(i-1)}||$.
% Now we consider $p_{i}$ and $p_{i-1}$, since

Since projecting gradients favors for the gradient that serves as the projecting target later, we put the clients with larger training losses at the end of the projecting order list $PO_t$ in the $t$th communication round to improve the model performance on the less well trained data. We allow $\alpha m$ clients to keep their original gradients to further enhance fairness. We will detail $\alpha$ in Section \ref{sec:3.3}.
% The details of FedFV is described in Algorithm 1 and 2 .
    
    \begin{algorithm}[tb]
    \caption{MitigateInternalConflict}
    \label{alg:2}
    \textbf{Input}: $PO_t,G_t,\alpha$
    \begin{algorithmic}[1]
    \STATE Server selects a set $S_t^{\alpha}$ of the top $\alpha$ clients in $PO_t$.
    \STATE Server sets $g_k^{PC}\leftarrow g_k^t$.
    \FOR{each client $k\in S_t^{\alpha}$ in parallel}
    \FOR{each $g_{i_j}^t\in PO_t,j=1,...,m$}
        \IF{$g_k^{PC}\cdot g_{i_j}^t\textless 0$ and $k\ne i_j$}
        \STATE Server computes $g_k^{PC}\leftarrow g_k^{PC}-\frac{(g_{i_j}^t)\cdot g_k^{PC}}{||g_{i_j}^t||^2}g_{i_j}^t$.
        \ENDIF
    \ENDFOR
    \ENDFOR
    \STATE $g^t\leftarrow\frac{1}{m}\sum_{k=1}^m g_k^{PC}$
    \RETURN $g^t$
    \end{algorithmic}
    \end{algorithm}

\subsection{Mitigate External Conflicts}
Due to party selection and client dropping out, there is a sample bias during each communication round in FL \cite{advance}. A client that is not selected in the $t$th round can suffer a risk of being forgotten by the model whenever the combined update $g^t$ conflicts with its imaginary gradient $g_{img}^t$. We can consider such clients to have a weight of zero in each round. However, we cannot directly detect conflicts for such clients because we have no access to their real gradients. To address this issue, we estimate their real gradients according to their recent gradients, and we call such gradient conflicts external conflicts:
\begin{definition}
In the $t$th communication round, there are external conflicts if there is a client $h\notin S_t$ whose latest gradient $g_h^{t-k}$ conflicts with the combined update $g^t$, where $0\textless k\textless \tau, 0\textless \tau\textless t$. 
\end{definition}
An external conflict denotes the conflict between the assumed gradient of a client that has not been selected and the combined update $g^t$. We add extra steps to prevent the model from forgetting data of clients beyond current selection. We also iteratively project the update $g^t$ onto the normal plane of the average of conflicting gradients for each previous round in a time-based order. The closer the round is, the later we make it the target to mitigate more conflicts between the update $g^t$ and more recent conflicting gradients according to Theorem \ref{th-1}. We scale the length of the update to $||\frac 1m\sum_k^m g_k^t||$ in the end because the length of all gradients can be enlarged by the projection. Algorithm \ref{alg:3} summarizes our steps to mitigate external conflicts. 
    \begin{algorithm}[tb]
    \caption{MitigateExternalConflict}
    \label{alg:3}
    \textbf{Input}: $g^t, GH, \tau$
    \begin{algorithmic}[1]
    \FOR{round $t-i, i=\tau,\tau-1,...,1$}
        \STATE $g_{con}\leftarrow 0$
        \FOR{each client $k=1,2,...,K$}
            \IF{$t_k=t-i$}
                \IF{$g^t\cdot g_{k}^{t_k}\textless 0$}
                \STATE $g_{con}\leftarrow g_{con}+g_{k}^{t_k}$
                \ENDIF
            \ENDIF
        \ENDFOR
        \IF{$g^t\cdot g_{con}\textless 0$}
        \STATE Server computes $g^t\leftarrow g^t-\frac{g^t\cdot g_{con}}{||g_{con}||^2}g_{con}$.
        \ENDIF
    \ENDFOR
    \RETURN $g^t$
    \end{algorithmic}
    \end{algorithm}
\subsection{Analysis}\label{sec:3.3}
% Now we discuss how FedFV adaptively controls the client weights by a loss-based projected order to achieve fairness.
We analyze the capability of FedFV to mitigate gradient conflicts and its convergence. We show how FedFV can find a pareto stationary solution on convex problems.
\begin{theorem}\label{th-2}
Suppose that there is a set of gradients $G=\{g_1,g_2,...,g_m\}$ where $g_i$ always conflicts with $g_j^{(t_j)}$ before projecting $g_j^{(t_j)}$ to $g_i$'s normal plane and $g_i^{(t_i)}$ is obtained by projecting $g_i$ to the normal planes of different gradients in $G$ for $t_i$ times. If $\epsilon_1 \le|\cos{<g_i^{(t_i)},g_j^{(t_j)}>}|\le \epsilon_2, 0\textless\epsilon_1\le\epsilon_2\le 1 $, then as long as we iteratively project $g_i$ onto $g_k$'s normal plane (skipping $g_i$ it self) in the ascending order of $k$ where $k=1,2,...,m$, the 
maximum value of $|g_k\cdot \bar{g}'|$ is bounded by $\frac{m-1}{m}(\max_{i}{||g_i||})^2 f(m,k,\epsilon_1,\epsilon_2)$, where $f(m,k,\epsilon_1,\epsilon_2)=\frac{\epsilon_2^2(1-\epsilon_1^2)^{\frac 12}(1-(1-\epsilon_1^2)^{\frac {m-k}2})}{1-(1-\epsilon_1^2)^{\frac 12}}$.
\end{theorem}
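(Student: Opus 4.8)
The plan is to refine the argument behind Theorem \ref{th-1}: rather than merely exhibiting a shrinking upper bound, I will track its exact dependence on $\epsilon_1,\epsilon_2$ and on $m-k$. The two elementary facts used throughout are: (i) projecting a vector $v$ onto the normal plane of $w$ produces $v'=v-\frac{v\cdot w}{\|w\|^2}w$ with $v'\cdot w=0$ and $\|v'\|=\|v\|\,(1-\cos^2\langle v,w\rangle)^{1/2}$; and (ii) under the hypothesis $\epsilon_1\le|\cos\langle\cdot,\cdot\rangle|\le\epsilon_2$, every projection that is actually performed — and by hypothesis every projection is performed, since a conflict is always present — contracts the norm by a factor in $[(1-\epsilon_2^2)^{1/2},(1-\epsilon_1^2)^{1/2}]$.

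First I would fix the target index $k$ and, for each source index $i\ne k$, follow the iterate $g_i^{(\cdot)}$ as it is projected onto $g_1^\perp,g_2^\perp,\dots$ in increasing order (skipping $g_i$). Let $v_{-1}$ be the iterate just before it meets the target $g_k^\perp$, let $v_0$ be the result (so $v_0\cdot g_k=0$), and let $v_1,v_2,\dots$ be the successive results of the subsequent projections onto $g_{k+1}^\perp,g_{k+2}^\perp,\dots$ (there are at most $m-k$ of these later targets). Writing $\rho_j:=v_j\cdot g_k$, fact (i) gives $\rho_j=\rho_{j-1}-\frac{(v_{j-1}\cdot g_{k+j})(g_{k+j}\cdot g_k)}{\|g_{k+j}\|^2}$, and the subtracted term equals $\|v_{j-1}\|\,\|g_k\|\cos\langle v_{j-1},g_{k+j}\rangle\cos\langle g_{k+j},g_k\rangle$, whose absolute value is at most $\epsilon_2^2\,\|v_{j-1}\|\,\|g_k\|$. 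Combining $\rho_0=0$ with the norm-contraction $\|v_{j-1}\|\le\|v_0\|(1-\epsilon_1^2)^{(j-1)/2}$ and summing the resulting geometric series yields $|\rho_{m-k}|\le\epsilon_2^2\,\|g_k\|\,\|v_0\|\sum_{l=0}^{m-k-1}(1-\epsilon_1^2)^{l/2}=\epsilon_2^2\,\|g_k\|\,\|v_0\|\,\frac{1-(1-\epsilon_1^2)^{(m-k)/2}}{1-(1-\epsilon_1^2)^{1/2}}$.

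The decisive step is to extract one further factor of $(1-\epsilon_1^2)^{1/2}$. Since $v_0$ is itself the output of the projection onto $g_k^\perp$ — which does occur, again because a conflict is present there — fact (ii) gives $\|v_0\|\le(1-\epsilon_1^2)^{1/2}\|v_{-1}\|\le(1-\epsilon_1^2)^{1/2}\max_l\|g_l\|$, while trivially $\|g_k\|\le\max_l\|g_l\|$. Substituting, $|g_i^{(m)}\cdot g_k|=|\rho_{m-k}|\le(\max_l\|g_l\|)^2\,f(m,k,\epsilon_1,\epsilon_2)$ for every $i\ne k$. Finally I would average over the $m-1$ sources $i\ne k$ in $\bar g'=\frac1m\sum_i g_i^{(m)}$; the remaining summand $g_k\cdot g_k^{(m)}$ is discarded exactly as in the proof of Theorem \ref{th-1} when one is interested only in the conflicting part of $g_k\cdot\bar g'$, giving $|g_k\cdot\bar g'|\le\frac{m-1}{m}(\max_l\|g_l\|)^2 f(m,k,\epsilon_1,\epsilon_2)$.

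I expect the bookkeeping in the middle step to be the main obstacle: after $g_i$ has been cleaned of its $g_k$-component, each later projection onto $g_{k+1}^\perp,\dots,g_m^\perp$ re-injects a $g_k$-component, and one must bound the size of every re-injection with the \emph{upper} cosine bound $\epsilon_2$ while simultaneously invoking the \emph{lower} cosine bound $\epsilon_1$ to geometrically damp the factors $\|v_{j-1}\|$ that multiply those re-injections — getting these opposite-direction estimates to coexist cleanly, and noticing that $v_0$ already carries one contraction, is exactly what produces the stated $f$ rather than a looser bound. A secondary point is that the cosine hypothesis is phrased for iterate pairs $g_i^{(t_i)},g_j^{(t_j)}$, so one must verify it is being applied to the pairs that actually arise (including the degenerate cases $t_j=0$, needed for $\cos\langle g_{k+j},g_k\rangle$ and $\cos\langle v_{-1},g_k\rangle$), and that replacing ``at most $m-k$ later targets'' by exactly $m-k$ (the skipped self-index may reduce the count by one) only makes the bound more conservative.
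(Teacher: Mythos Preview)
Your proposal is correct and takes essentially the same approach as the paper. The paper's proof imports the intermediate inequality from Theorem~\ref{th-1}, namely $|g_k\cdot\bar g'|\le\frac{\epsilon_2^2}{m}\|g_k\|\sum_{j=k}^{m-1}\sum_{i\ne j+1}\|g_i^{(j)}\|$, then inserts the contraction $\|g_i^{(j)}\|\le(1-\epsilon_1^2)^{j/2}\|g_i\|$ and sums the geometric series in $j$; your per-source tracking of $\rho_j$ is this same double sum with the order of summation exchanged, and the extra factor $(1-\epsilon_1^2)^{1/2}$ you pull from $\|v_0\|$ is exactly what the paper gets from its sum starting at exponent $j\ge k\ge 1$.
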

\begin{proof}
See Appendix A.2.
\end{proof}
According to Theorem \ref{th-2}, it is possible for FedFV to bound the maximum conflict for any gradient by choosing $k$ to let $f(m,k,\epsilon_1,\epsilon_2)\textless \epsilon_2$ with any given $m,m\ge 2$, since the possible maximum value of conflicts between $g_k$ and the original average gradient is $\max_k{|g_k\cdot \bar{g}|}\le \frac{m-1}{m}\epsilon_2(\max_k{||g_k||})^2$. 
% For example, if $m\rightarrow +\infty$, we can choose $\epsilon_0=0.7861$, and if $m=2$, we can choose $\epsilon_0=1$, since $f(2,\epsilon)$ is less than $\frac 12$ for $\epsilon\in[0,1]$. If we mitigate the conflicts between $g_i$ and $g_j$ only when $|cos<g_i,g_j>|\le \epsilon_0$, we can further reduce the upper bound of the conflicts.

In practice, we mitigate all the conflicts that have been detected
% because $\epsilon_0$ is just a coefficient
to limit the upper bound of gradient conflicts of clients. 
We allow $\alpha m$ clients with large training losses to keep their original gradients to further enhance fairness. When $\alpha=1$, all clients keep their original gradients so that FedFV covers FedAvg, and when $\alpha=0$, all clients are enforced to mitigate conflicts with others. Parameter $\alpha$ thus controls the degree of mitigating conflicts and can be applied to seek for a balance.

Next, we show that FedFV can find a pareto stationary point according Theorems 3 and 4.
\begin{theorem}\label{th-3}
Assume that there are only two types of users whose objective functions are $F_1(\theta)$ and $F_2(\theta)$, and each objective function is differentiable, $L$-smooth and convex. For the average objective $F(\theta)=\frac 12\sum F_i(\theta)$, FedFV with stepsize $\eta\le\frac 1L$ will converge to either 1) a pareto stationary point, 2) or the optimal $\theta^*$.
\end{theorem}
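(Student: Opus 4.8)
The plan is to reduce the two‑type setting to a two‑gradient problem, exhibit $F=\tfrac12(F_1+F_2)$ as a Lyapunov function, and show its accumulated decrease forces every accumulation point of $\{\theta^t\}$ to be Pareto stationary — with the sharper conclusion that it equals the minimizer $\theta^*$ of $F$ whenever internal conflicts eventually cease or both gradients vanish in the limit. First I would make the update explicit. Treating each round's local step as one gradient step of size $\eta$, write $u=\nabla F_1(\theta^t)$, $v=\nabla F_2(\theta^t)$, so $g_k^t$ is $\eta u$ or $\eta v$, $\nabla F(\theta^t)=\tfrac12(u+v)$, and (being an average of $L$‑smooth maps) $\nabla F$ is $L$‑Lipschitz. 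When $u\cdot v\ge 0$ no projection occurs, the rescaling in line~10 is a no‑op, and the round is exactly $\theta^{t+1}=\theta^t-\eta\nabla F(\theta^t)$, one gradient‑descent step on $F$. When $u\cdot v<0$, Algorithm~2 produces, up to the positive rescaling factor, the PCGrad combination $g^{PC}=\tfrac12(u^{PC}+v^{PC})$ with $u^{PC}=u-\tfrac{u\cdot v}{\|v\|^2}v$, $v^{PC}=v-\tfrac{u\cdot v}{\|u\|^2}u$; values $\alpha<1$ only interpolate this toward the plain average, and $\alpha=1$ is FedAvg, so it suffices to handle these two directions.

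Second I would prove a one‑step decrease of $F$. In a no‑conflict round this is the textbook bound $F(\theta^{t+1})\le F(\theta^t)-\eta(1-\tfrac{L\eta}{2})\|\nabla F(\theta^t)\|^2\le F(\theta^t)-\tfrac{\eta}{2}\|\nabla F(\theta^t)\|^2$ for $\eta\le 1/L$. In a conflict round I would use the Gram–Schmidt identities $u\cdot u^{PC}=\|u\|^2\sin^2\phi$ and $u\cdot v^{PC}=0$ (and symmetrically), where $\phi=\angle(u,v)\in(\tfrac{\pi}{2},\pi]$, to get $\nabla F(\theta^t)\cdot g^{PC}=\tfrac14(\|u\|^2+\|v\|^2)\sin^2\phi$ and $\|g^{PC}\|^2=\tfrac14\sin^2\phi\,(\|u\|^2+\|v\|^2+2\|u\|\,\|v\|\,|\cos\phi|)$; the AM–GM bound $\tfrac12(\|u\|^2+\|v\|^2)\ge\|u\|\,\|v\|\ge\|u\|\,\|v\|\,|\cos\phi|$ then yields $\|g^{PC}\|^2\le 2\,\nabla F(\theta^t)\cdot g^{PC}$. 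Substituting this and $L$‑smoothness into $F(\theta^{t+1})\le F(\theta^t)-\eta\,\nabla F(\theta^t)\cdot g^{PC}+\tfrac{L\eta^2}{2}\|g^{PC}\|^2$ gives $F(\theta^{t+1})\le F(\theta^t)-\eta(1-L\eta)\,\nabla F(\theta^t)\cdot g^{PC}\le F(\theta^t)-\tfrac{\eta(1-L\eta)}{4}\big(\|\nabla F_1(\theta^t)\|^2+\|\nabla F_2(\theta^t)\|^2\big)\sin^2\phi_t$. Hence $F(\theta^t)$ is non‑increasing; since $F$ is convex and (by hypothesis) attains its minimum at $\theta^*$, it is bounded below, so $F(\theta^t)\downarrow F_\infty\ge F(\theta^*)$ and the per‑round decrements are summable.

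Third, I would exploit summability. If only finitely many rounds conflict, then beyond some $T_0$ FedFV is plain gradient descent on the convex $L$‑smooth $F$ with $\eta\le 1/L$, which converges to $\theta^*$. Otherwise summability gives $\|\nabla F(\theta^t)\|\to 0$ along the no‑conflict rounds and $\big(\|\nabla F_1(\theta^t)\|^2+\|\nabla F_2(\theta^t)\|^2\big)\sin^2\phi_t\to 0$ along the conflict rounds; by continuity, at any accumulation point $\bar\theta$ either $\nabla F_1(\bar\theta)=\nabla F_2(\bar\theta)=0$ (a common minimizer, so $\nabla F(\bar\theta)=0$, i.e.\ $\bar\theta=\theta^*$), or $\sin\angle(\nabla F_1(\bar\theta),\nabla F_2(\bar\theta))=0$ while $\nabla F_1(\bar\theta)\cdot\nabla F_2(\bar\theta)\le 0$, forcing $\nabla F_1(\bar\theta)=-\gamma\,\nabla F_2(\bar\theta)$ for some $\gamma\ge 0$; then $\lambda_1=\tfrac{1}{1+\gamma}$, $\lambda_2=\tfrac{\gamma}{1+\gamma}$ witness that $\bar\theta$ is Pareto stationary. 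Since $\theta^*$ is itself Pareto stationary, this is exactly the claimed dichotomy.

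The main obstacle is the magnitude‑rescaling step of line~10 (and, secondarily, the external‑conflict projections of Algorithm~3). Rescaling $g^{PC}$ to length $\|\tfrac1m\sum_k g_k^t\|=\eta\|\nabla F(\theta^t)\|$ alters the effective step, and a direct computation shows the clean inequality $\|g^{PC}\|^2\le 2\,\nabla F\cdot g^{PC}$ can degrade exactly when the two gradients are simultaneously very unbalanced in norm \emph{and} nearly anti‑parallel — but that regime already lies in an arbitrarily small neighbourhood of Pareto stationarity, so it can be folded into the dichotomy (equivalently, one proves the result for the un‑rescaled variant, to which the scaling amounts to a bounded effective step that can be kept $\le 1/L$). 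The external‑conflict step is a further sequence of Gram–Schmidt projections against stored past gradients; one checks it does not increase $\|g^t\|$ prior to rescaling and, for large $t$, is inactive along the trajectory (or one chooses $\tau$ accordingly), so the Lyapunov argument is untouched. A last technical point — upgrading the subsequential convergence of gradients and angles to genuine convergence of $\{\theta^t\}$ — follows from the monotone decrease of $F$ via standard Fejér‑type/boundedness arguments under the stated assumption that $F$ has a minimizer.
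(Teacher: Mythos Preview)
Your approach is essentially the same as the paper's: both use the $L$-smoothness descent lemma, split on whether $g_1\cdot g_2\ge 0$, compute the projected update explicitly, and show $F$ is monotonically non-increasing with a decrement that vanishes only at the optimum or at a Pareto stationary point. The algebraic packaging differs slightly: the paper's one-step bound is $F(\theta^{t+1})\le F(\theta^t)-\tfrac{\eta}{2}\sin^2\phi_{12}\,\|g_1+g_2\|^2$, whereas you obtain the (equivalent for the dichotomy, and somewhat cleaner) bound $F(\theta^{t+1})\le F(\theta^t)-\tfrac{\eta(1-L\eta)}{4}\sin^2\phi_t\,(\|g_1\|^2+\|g_2\|^2)$ via the identities $u\cdot u^{PC}=\|u\|^2\sin^2\phi$, $u\cdot v^{PC}=0$ and the inequality $\|g^{PC}\|^2\le 2\,\nabla F\cdot g^{PC}$.

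Where you go beyond the paper is in rigor: the paper simply asserts ``the objective will always degrade unless $\|g\|=0$ or $\cos\phi_{12}=-1$'' and stops, without a summability or accumulation-point argument, and its analyzed update rule (their equation (10)) silently omits both the magnitude rescaling of line~10 and the external-conflict projections of Algorithm~3. Your proposal at least flags these as obstacles and sketches how to absorb them; that discussion is hand-wavy but no more so than the paper's, which ignores the issue entirely. In short, your proof is a more careful version of the same argument.
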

\begin{proof}
See Appendix A.3.
\end{proof}
\begin{theorem}\label{th-4}
Assume that there are $m$ objective functions $F_1(\theta)$, $F_2(\theta)$,\ldots,  $F_m(\theta)$, and each objective function is differentiable, $L$-smooth and convex. If $\cos{<\bar{g},\bar{g}'>}\ge \frac 12$ and $||\bar{g}||\ge||\bar{g}'||$ where $\bar{g}$ is the true gradient for the average objective $F(\theta)=\frac 1m\sum F_i(\theta)$, then FedFV with step size $\eta\le\frac 1L$ will converge to either 1) a pareto stationary point, 2) or the optimal $\theta^*$.
\end{theorem}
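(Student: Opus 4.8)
The plan is to view one communication round of FedFV as an inexact gradient step $\theta^{t+1} = \theta^t - \eta\bar g'$ on the average objective $F$, where $\bar g = \nabla F(\theta^t) = \frac1m\sum_i g_i$ is the true gradient and $\bar g'$ is the conflict‑mitigated aggregate that Algorithms~\ref{alg:2}--\ref{alg:3} return, and then to use the two hypotheses to certify that $\bar g'$ is a descent direction aligned well enough with $\bar g$ that the admissible step $\eta\le 1/L$ produces a monotone decrease of $F$.

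First I would apply the descent lemma for an $L$-smooth function,
\[
  F(\theta^{t+1}) \;\le\; F(\theta^t) - \eta\,\langle\bar g,\bar g'\rangle + \tfrac{L\eta^2}{2}\,\|\bar g'\|^2 .
\]
The condition $\cos\langle\bar g,\bar g'\rangle \ge \tfrac12$ gives $\langle\bar g,\bar g'\rangle \ge \tfrac12\|\bar g\|\,\|\bar g'\|$, while $\|\bar g\|\ge\|\bar g'\|$ gives $\|\bar g'\|^2 \le \|\bar g\|\,\|\bar g'\|$, and $\eta\le 1/L$ gives $L\eta^2\le\eta$; plugging these in yields
\[
  F(\theta^{t+1}) - F(\theta^t) \;\le\; -\tfrac{\eta}{2}\,(1 - L\eta)\,\|\bar g\|\,\|\bar g'\| \;\le\; 0 ,
\]
a strict decrease whenever $\eta<1/L$ and neither $\bar g$ nor $\bar g'$ vanishes (the boundary value $\eta=1/L$ needs the sharper form of the smoothness inequality). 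Hence $\{F(\theta^t)\}$ is non-increasing; since $F$ is convex and attains its minimum at $\theta^*$, it is bounded below, so $F(\theta^t)$ converges, and telescoping the per-round bound gives $\sum_t \|\bar g^t\|\,\|\bar g'^t\| < \infty$, whence $\|\bar g^t\|\,\|\bar g'^t\|\to 0$.

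I would then characterize the limit. Extract a subsequence $\theta^{t_k}\to\theta^\infty$ (using that the iterates remain in the sublevel set $\{F\le F(\theta^0)\}$, which I would assume compact); by continuity of the gradients, $\|\bar g(\theta^\infty)\|\,\|\bar g'(\theta^\infty)\| = 0$, so exactly one of two things holds. If $\bar g(\theta^\infty)=\nabla F(\theta^\infty)=0$, convexity makes $\theta^\infty$ a global minimizer of $F$, i.e.\ the optimum $\theta^*$ (conclusion~2). If instead $\bar g'(\theta^\infty)=0$, I would show $\theta^\infty$ is Pareto stationary: when no pair of projected gradients conflicts the projection step is inert and $\bar g'=\bar g$, so $\bar g'=0$ already means $\nabla F(\theta^\infty)=0$; when conflicts are present, $\bar g'=0$ forces the projected gradients into mutual opposition, which — just as the two-type case of Theorem~\ref{th-3} reduces to $g_1,g_2$ being anti-parallel — places $0$ in the convex hull $\mathrm{conv}\{\nabla F_1(\theta^\infty),\dots,\nabla F_m(\theta^\infty)\}$, i.e.\ no direction decreases every $F_i$ (conclusion~1).

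The hard part is this last step for general $m$: passing from ``the conflict-mitigated average $\bar g'$ is $0$'' to ``$0$ belongs to the convex hull of the individual gradients''. For $m=2$ it is transparent (Theorem~\ref{th-3}), but for $m>2$ each $g_i^{(m)}$ is a cumulative chain of projections against several $g_j$'s, so one must control the cone spanned by the projected gradients — or lean on the hypotheses $\cos\langle\bar g,\bar g'\rangle\ge\tfrac12$ and $\|\bar g\|\ge\|\bar g'\|$ themselves, which already rule out $\bar g'=0$ except at genuine Pareto points. A secondary technical point is justifying the compactness used to extract $\theta^\infty$, and noting explicitly that $\bar g'$ in the analysis is the FedFV direction \emph{before} the length-rescaling in line~10 of Algorithm~\ref{alg:1}, so that the descent lemma is being applied to the actual model update.
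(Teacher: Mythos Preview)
Your approach is essentially identical to the paper's: both apply the descent lemma, use $\cos\langle\bar g,\bar g'\rangle\ge\tfrac12$ to get $\langle\bar g,\bar g'\rangle\ge\tfrac12\|\bar g\|\,\|\bar g'\|$, use $\|\bar g\|\ge\|\bar g'\|$ to get $\|\bar g'\|^2\le\|\bar g\|\,\|\bar g'\|$, and conclude $F(\theta^{t+1})-F(\theta^t)\le\bigl(-\tfrac{\eta}{2}+\tfrac{L\eta^2}{2}\bigr)\|\bar g\|\,\|\bar g'\|\le 0$ for $\eta\le 1/L$.

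The paper is in fact \emph{less} rigorous than your proposal: it stops at the per-round decrease inequality and simply asserts the two terminal cases, with no telescoping, no subsequence extraction, and no compactness discussion. For the case you flag as ``the hard part'' ($\bar g'=0$ with $m>2$), the paper's argument is a one-line shortcut: it asserts that $\bar g'=0$ forces some pair of conflicting gradients to be exactly anti-parallel, then picks one such pair $g_i=-\beta g_j$, forms the convex combination $\tfrac{1}{1+\beta}g_i+\tfrac{\beta}{1+\beta}g_j=0$ exactly as in Theorem~\ref{th-3}, and sets all remaining weights to zero. Whether that assertion is fully justified is debatable, but it is all the paper offers, so your concern there is not a gap relative to the source --- your sketch already matches or exceeds the paper's level of detail. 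Your side remark about the length-rescaling in line~10 of Algorithm~\ref{alg:1} is also not addressed in the paper's proof.
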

\begin{proof}
See Appendix A.3.
\end{proof}

\section{Related Work}
\subsection{Fairness in FL}
Federated Learning (FL) is first proposed by McMahan et al \shortcite{fedavg} to collaboratively train a global model distributedly while preserving data privacy\cite{advance,fedprox,fedavg}. A number of studies focus on collaborative fairness where the server allocates different models to clients according to their contribution \cite{collaborative,fairpp,building}. A few other studies address the fairness of a uniform accuracy distribution across devices \cite{ucbcs,fedmgda,qfedavg,fedmultitaskcc,afl}. Mohri \emph{et al.} \shortcite{afl} and Li \emph{et al.} \shortcite{qfedavg} propose different federated objectives AFL and q-FFL to further improve fairness. Hu \emph{et al.} \shortcite{fedmgda} observe the competing relation between fairness and robustness to inflating loss attacks. Abay \emph{et al.} \shortcite{mitigate} analyze the potential causes of bias in FL which leads to unfairness, and they also point out the negative impact of sample bias due to the party selection. Cho \emph{et al.} \shortcite{ucbcs} also show that client selection has an impact on fairness, and they propose a bandit-based communication-efficient client selection strategy to overcome biasness. Huang \emph{et al.} \shortcite{fedfa} reweight clients according to their accuracy and numbers of times of being selected to achieve fairness. They use double momentum to accelerate the convergence. Different from these, we identify the conflicts among clients to be a potential cause of unfairness in FL. We mitigate such conflicts by computing a fairer average of gradients to achieve fair model performance across devices.
\subsection{Gradient Projection}
Gradient projection has been well studied in continual learning to mitigate the adversary impact of gradient updates to previously learned tasks \cite{elifelong,orthogonal,learninglongtermrem,gem}. Lopez-Paz \emph{et al.} \cite{gem} project gradient by solving a quadratic programming problem. Chaudhry \emph{et al.}  \shortcite{elifelong} project gradient onto the normal plane of the average gradient of previous tasks. Farajtabar \emph{et al.} \shortcite{orthogonal} project the current task gradients onto the orthonormal set of previous task gradients. Yu \emph{et al.} \shortcite{g_surgery} focus on adversary influence between task gradients when simultaneously learning multiple tasks. They iteratively project each task gradient onto the normal plane of conflicting gradients, which motivates our solution in this paper. To the best of our knowledge, we are the first to take the adversary gradient interference into consideration in FL. Specifically, our proposed FedFV method can build a connection between fairness and conflicting gradients with large differences in the magnitudes, and we prove its ability to mitigate two gradient conflicts (i.e., internal and external conflicts) and converge to Pareto stationary solutions.

\section{Experiments}

\subsection{Experimental Setup}\label{sec:5.1}
\subsubsection{Datasets and Models} We evaluate FedFV on three public datasets: CIFAR-10 \cite{cifar}, Fashion MNIST \cite{fmnist} and MNIST \cite{mnist}. We follow \cite{fedavg} to create non-I.I.D. datasets. For CIFAR-10, we sort all data records based on their classes, and then split them into 200 shards. We use 100 clients, and each client randomly picks 2 shards without replacement so that each has the same data size. We use 200 clients for MNIST and preprocess MNIST in the same way as CIFAR-10. The local dataset is split into training and testing data with percentages of 80\% and 20\%. For Fashion MNIST, we simply follow the setting in \cite{qfedavg}. We use a feedforward neural network with 2 hidden layers on CIFAR-10 and Fashion MNIST. We use a CNN with 2 convolution layers on MNIST. 
\subsubsection{Baselines} We compare with the classical method FedAvg \cite{fedavg} and FL systems that address fairness in FL, including AFL \cite{afl}, q-FedAvg \cite{qfedavg}, FedFa \cite{fedfa} and FedMGDA+ \cite{fedmgda}. We compare with q-fedavg, FedFa and fedmgda+ on all three datasets and compare with AFL only on Fashion MNIST,  because AFL is only suitable for small networks with dozens of clients. 
% We make a careful comparison with the existing FL systems including Fedavg, AFL, q-fedavg and FedMGDA+. 
\subsubsection{Hyper-parameters}For all experiments, we fix the local epoch $E=1$ and use batchsize $B_{CIFAR-10|MNIST}\in\{full, 64\}, B_{Fashion MNIST}\in\{full,400\}$ to run Stochastic Gradient Descent (SGD) on local datasets with stepsize $\eta\in \{0.01,0.1\}$. We verify the different methods with hyper-parameters as listed in Table \ref{tab:1}. We take the best performance of each method for the comparison.
\begin{table}\small
\centering
\begin{tabular}{p{1.5cm}p{6cm}}
\hline
Method      &  Parameters \\
\hline
AFL         & $\eta_{\lambda}\in\{0.01,0.1,0.5\}$    \\
qFedAvg     & $q\in\{0.1,0.2,1,2,5,15\}$     \\
FedMGDA+    & $\epsilon\in\{0.01,0.05,0.1,0.5,1\}$  \\
FedFA     & $(\alpha,\beta)\in\{(0.5,0.5)\},(\gamma_s,\gamma_c)\in\{(0.5,0.9)\}$\\
FedFV     & $\alpha\in\{0,0.1,0.2,\frac 13,0.5,\frac 23\},\tau\in\{0,1,3,10\}$\\
\hline
\end{tabular}
\caption{Method specific hyper-parameters.}
\label{tab:1}
\end{table}
\subsubsection{Implementation} All our experiments are implemented on a 64g-MEM Ubuntu 16.04.6 server with 40 Intel(R) Xeon(R) CPU E5-2630 v4 @ 2.20GHz and 4 NVidia(R) 2080Ti GPUs. All code is implemented in PyTorch version 1.3.1. Please see \url{https://github.com/WwZzz/easyFL} for full details.
\subsection{Experimental Results}

\subsubsection{Fairness}
We first verify the advantage of FedFV in fairness. In Table \ref{tab:2}, we list the the mean, variance, the worst 5\% and the best 5\% of test accuracy on 100 clients created by splitting CIFAR-10 where $10\%$ of clients are sampled in each communication round. While reaching a higher mean of test accuracy across clients, FedFV also yields the lowest variance among all experiments except for q-FedAvg$|_{q=5.0}$. Note that q-FedAvg sacrifices the best performance of clients from $69.28\%$ to $63.60\%$ and causes a significant reduction of the mean of accuracy from $46.85\%$ to $45.25\%$. When reaching a similar variance to q-FedAvg$|_{q=5.0}$ ($9.59$ versus $9.72$), FedFV improves the mean of accuracy to $50.42\%$. Further, the worst 5\% performance of all the experiments on FedFV is higher than that of any others, which indicates FedFV's effectiveness on protecting the worst performance.

\begin{table}
\centering
\setlength{\tabcolsep}{0.3mm}{
\renewcommand\arraystretch{1.3}
\begin{tabular}{>{\scriptsize}l|>{\footnotesize}l|>{\footnotesize}l|>{\footnotesize}l|>{\footnotesize}l}
\hline
\textrm{\footnotesize Method}    & Ave. & Var. & Worst 5\% & Best 5\% \\
\hline
FedAvg     & 46.85$\pm$0.65 & 12.57$\pm$1.50 & 19.84$\pm$6.55 & 69.28$\pm$1.17\\
\hline
qFedAvg$|_{q=0.1}$ & 47.02$\pm$0.89 & 13.16$\pm$1.84 & 18.72$\pm$6.94 & 70.16$\pm$2.06\\
qFedAvg$|_{q=0.2}$ & 46.91$\pm$0.90 & 13.09$\pm$1.84 & 18.88$\pm$7.00 & 70.16$\pm$2.10\\
qFedAvg$|_{q=1.0}$ & 46.79$\pm$0.73 & 11.72$\pm$1.00 & 22.80$\pm$3.39 & 68.00$\pm$1.60\\
qFedAvg$|_{q=2.0}$ & 46.36$\pm$0.38 & 10.85$\pm$0.76 & 24.64$\pm$2.17 & 66.80$\pm$2.02\\
qFedAvg$|_{q=5.0}$ & 45.25$\pm$0.42 &  \textbf{9.59$\pm$0.36} & 26.56$\pm$1.03 & 63.60$\pm$1.13\\
\hline
FedFa & 46.43$\pm$0.56 & 12.79$\pm$1.54 & 19.28$\pm$6.78 & 69.36$\pm$1.40\\
\hline
FedMGDA+$|_{\epsilon=0.01}$& 45.65$\pm$0.21 & 10.94$\pm$0.87 & 25.12$\pm$2.34 & 67.44$\pm$1.20\\
FedMGDA+$|_{\epsilon=0.05}$& 45.58$\pm$0.21 & 10.98$\pm$0.81 & 25.12$\pm$1.87 & 67.76$\pm$2.27\\
FedMGDA+$|_{\epsilon=0.1}$& 45.52$\pm$0.17 & 11.32$\pm$0.86 & 24.32$\pm$2.24 & 68.48$\pm$2.68\\
FedMGDA+$|_{\epsilon=0.5}$& 45.34$\pm$0.21 & 11.63$\pm$0.69 & 24.00$\pm$1.93 & 68.64$\pm$3.11\\
FedMGDA+$|_{\epsilon=1.0}$& 45.34$\pm$0.22 & 11.64$\pm$0.66 & 24.00$\pm$1.93 & 68.64$\pm$3.11\\
\hline
FedFV$|_{\alpha=0.1,\tau=0}$& 48.57$\pm$0.76 & 10.97$\pm$1.02 & 28.32$\pm$2.01 & 69.76$\pm$2.45\\
FedFV$|_{\alpha=0.2,\tau=0}$& 48.54$\pm$0.64 & 10.56$\pm$0.96 & 28.88$\pm$1.92 & 69.20$\pm$2.01\\
FedFV$|_{\alpha=0.5,\tau=0}$& 48.14$\pm$0.38 & 10.60$\pm$0.76 & 28.72$\pm$1.92 & 68.80$\pm$1.77\\
\hline
FedFV$|_{\alpha=0.1,\tau=1}$& 49.34$\pm$0.56 & 10.74$\pm$0.89 & 28.56$\pm$2.54 & 70.24$\pm$1.49\\
FedFV$|_{\alpha=0.1,\tau=3}$& 50.00$\pm$0.74 & 10.85$\pm$1.14 & 28.24$\pm$3.03 &  \textbf{70.96$\pm$1.00}\\
FedFV$|_{\alpha=0.1,\tau=10}$& \textbf{50.42$\pm$0.55} & 9.70$\pm$0.96 &  \textbf{32.24$\pm$2.10} & 69.68$\pm$2.84\\
\hline
\end{tabular}
}
\caption{The average, the variance, the worst and the best of the test accuracy of all clients on CIFAR-10. All experiments are running over 2000 rounds with full batch size, learning rate $\eta=0.1$ and local epochs $E=1$. The reported results are averaged over 5 runs with different random seeds.}
\label{tab:2}
\end{table}
Table \ref{tab:3} lists the model test accuracy on each client's data and the average and variance of the accuracy on Fashion-MNIST dataset. Different from settings in Table \ref{tab:2}, we use a small network with only 3 clients, and the server selects all the clients to participate the training process in each communication round to eliminate the sample bias, which results in the absence of external conflicts. Therefore, we only use FedFV with $\alpha\in\{0,\frac 13,\frac 23\},\tau=0$ in this case. In our observation, even though FedFV with $\alpha=0$ can still find the solution with the highest mean of test accuracy $80.48\%$, it's a unfair solution where the variance is up to $13.76$. Instead, FedFV with $\alpha=\frac 23$ also finds a solution with high accuracy $80.28\%$, and simultaneously keep the variance in a low level. We notice that AFL$|_{\eta_\lambda=0.5}$ has the lowest variance. However, AFL only works for small works where the number of clients is limited. Despite of this, FedFV$|_{\alpha=\frac 23}$ outperforms  AFL$|_{\eta_\lambda=0.5}$ on all the clients (i.e. shirt, pullover and T-shirt), as well as the mean of accuracy. Further, when compared to other methods except AFL, our method has the lowest variance down to 1.77, which validates the ability of FedFV to find a fair and accurate solution without the presence of sample bias.

From Table \ref{tab:2} and Table \ref{tab:3}, we observe that FedFV is able to ease the negative impact of conflicting gradients with largely different magnitudes, which results in a fairer and accurate model with lower variance.

\begin{table}\scriptsize
\centering
\setlength{\tabcolsep}{0.3mm}{
\renewcommand\arraystretch{1.4}
\begin{tabular}{l|l|l|l|l|l}
\hline
Method      & $shirt$& $pullover$& $T-shirt$ & Ave. & Var. \\
\hline
FedAvg & 64.26$\pm$1.07 & 87.03$\pm$1.40 & 89.97$\pm$1.30 & 80.42$\pm$0.76 & 11.50$\pm$0.95\\
\hline
AFL$|_{\eta_\lambda=0.01}$& 71.09$\pm$0.91 & 81.89$\pm$1.30 & 84.29$\pm$1.23 & 79.09$\pm$0.73 & 5.76$\pm$0.75\\
AFL$|_{\eta_\lambda=0.1}$& 76.34$\pm$0.63 & 79.00$\pm$1.04 & 79.06$\pm$1.21 & 78.13$\pm$0.72 & 1.27$\pm$0.66\\
AFL$|_{\eta_\lambda=0.5}$& 76.57$\pm$0.58 & 78.77$\pm$0.99 & 79.09$\pm$1.15 & 78.14$\pm$0.71 & \textbf{1.12$\pm$0.61}\\
\hline
qFedAvg$|_{q=5}$& 71.29$\pm$0.85 & 81.46$\pm$1.08 & 82.86$\pm$0.97 & 78.53$\pm$0.64 & 5.16$\pm$0.69\\
qFedAvg$|_{q=15}$& 77.09$\pm$0.88 & 75.40$\pm$2.07 & 60.69$\pm$1.79 & 71.06$\pm$0.81 & 7.46$\pm$0.88\\
\hline
FedFa & 63.80$\pm$1.10 & 86.86$\pm$1.28 & 89.97$\pm$1.30 & 80.21$\pm$0.73 & 11.68$\pm$0.93\\
\hline
FedMGDA+$|_{\epsilon=0.05}$& 44.63$\pm$1.63 & 57.77$\pm$5.72 & \textbf{99.09$\pm$0.23} & 67.16$\pm$1.65 & 23.39$\pm$0.57\\
FedMGDA+$|_{\epsilon=0.1}$& 72.26$\pm$3.12 & 79.71$\pm$4.47 & 86.03$\pm$4.27 & 79.33$\pm$1.11 & 6.45$\pm$2.21\\
FedMGDA+$|_{\epsilon=1.0}$& 72.46$\pm$3.32 & 79.74$\pm$4.48 & 85.66$\pm$4.88 & 79.29$\pm$1.14 & 6.42$\pm$2.23\\
\hline
FedFV$|_{\alpha=0.0,\tau=0}$& 61.06$\pm$1.21 & \textbf{89.31$\pm$1.38} & 91.06$\pm$1.29 & \textbf{80.48$\pm$0.68} & 13.76$\pm$1.02\\
FedFV$|_{\alpha=\frac 13,\tau=0}$& 77.09$\pm$1.35 & 80.69$\pm$1.53 & 81.06$\pm$0.53 & 79.61$\pm$0.85 & 1.91$\pm$0.57\\
FedFV$|_{\alpha=\frac 23,\tau=0}$& \textbf{77.91$\pm$0.80} & 81.46$\pm$1.99 & 81.46$\pm$1.51 & 80.28$\pm$1.09 & 1.77$\pm$0.87\\
\hline
\end{tabular}
}
\caption{Test accuracy on the different clothing classes of Fashion MNIST dataset. All experiments are running over 200 rounds with full batch size, learning rate $\eta=0.1$ and local epochs $E=1$. The reported results are averaged over 5 runs with different random seeds.}
\label{tab:3}
\end{table}
\subsubsection{Accuracy and Efficiency}
 \begin{table}
\centering
\setlength{\tabcolsep}{0.1mm}{
\renewcommand\arraystretch{1.2}
\begin{tabular}{l|l|l|l}
\hline
     & FedFV &FedFV$_{Random}$& FedFV$_{Reverse}$ \\
\hline
$Var._{CIFAR10}$&\textbf{13.19$\pm$1.06} & 14.12$\pm$0.55 & 16.28$\pm$0.72 \\
$Var._{FMNIST}$ &\textbf{13.76$\pm$1.02} & 20.14$\pm$0.61 & 22.05$\pm$0.68\\  
\hline
\end{tabular}
}
\caption{The effects of projecting order to fairness. The results are averaged over 5 runs with different random seeds.}
\label{tab:4}
\end{table}

% \begin{figure}
% \centering
% \includegraphics[scale=0.25]{fig2.jpg}
% \caption{The mean and variance of test accuracy on all clients after 2000 communication rounds on FEMNIST.} \label{fig2}
% \end{figure}

We further show that FedFV outperforms existing works addressing fairness in terms of accuracy and efficiency on CIFAR-10 and MNIST. All the methods are tuned to their best performance. As shown in Figure \ref{fig:2}, when we only mitigate the internal conflicts, which is done by FedFV($\tau=0$), we already converge faster while keeping a lower variance than the others. In addition, when we mitigate both the internal and external conflicts by FedFV($\tau\textgreater0$), there is a substantial improvement in the accuracy and efficiency again.
FedFV outperforms state-of-the-art methods by up to 7.2\% on CIFAR-10 and 78\% on MNIST, which verifies the ability of FedFV to reach a higher accuracy with less communication rounds 
while still keeping a low variance. Thus, we demonstrate that FedFV's advantages in saving communication cost and finding a better generalization. 
\begin{figure}
\centering
\subcaptionbox{CIFAR-10\label{subfig:above}}
    {%
        \includegraphics[width = .5\linewidth]{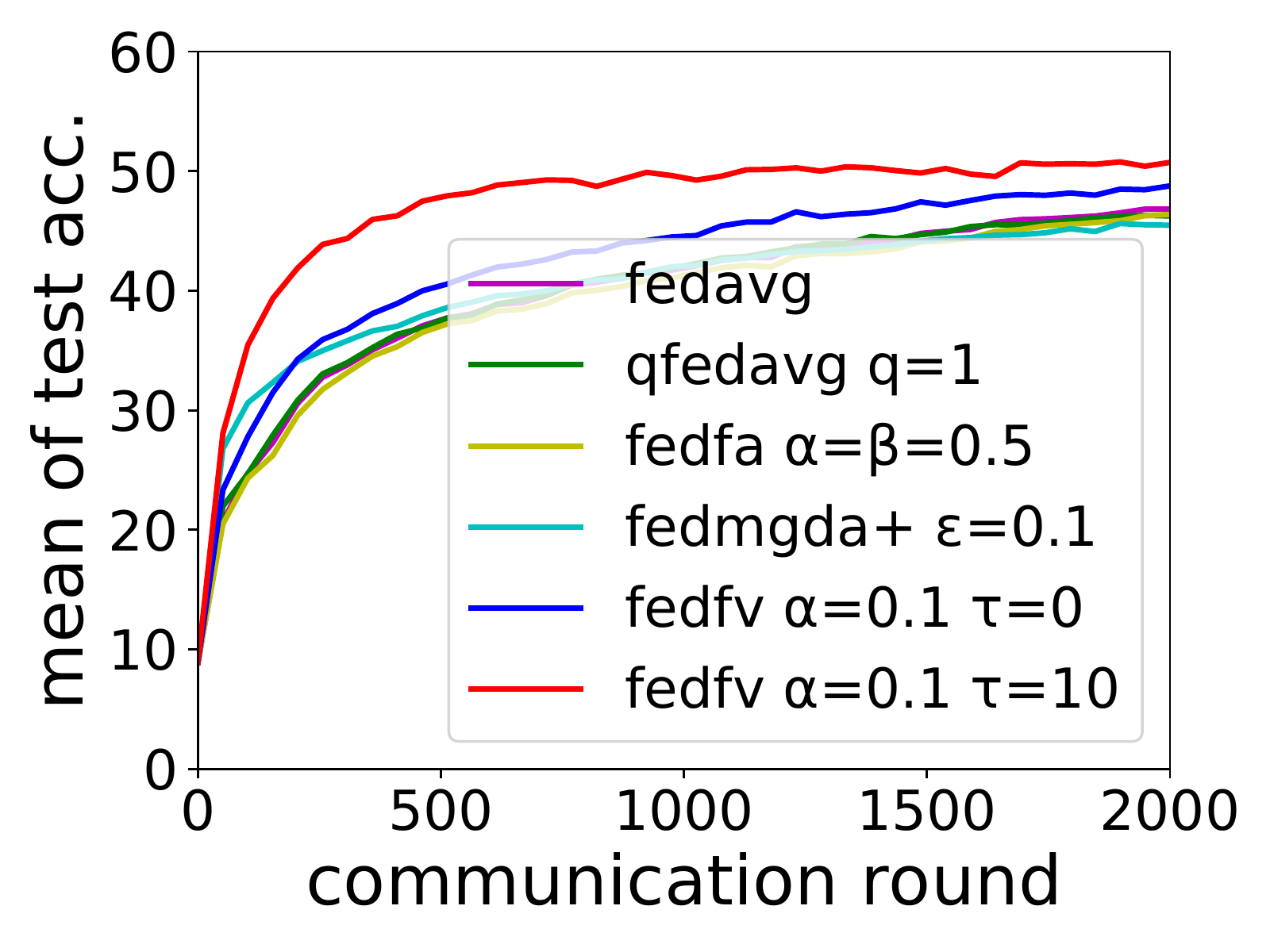}
        \includegraphics[width = .5\linewidth]{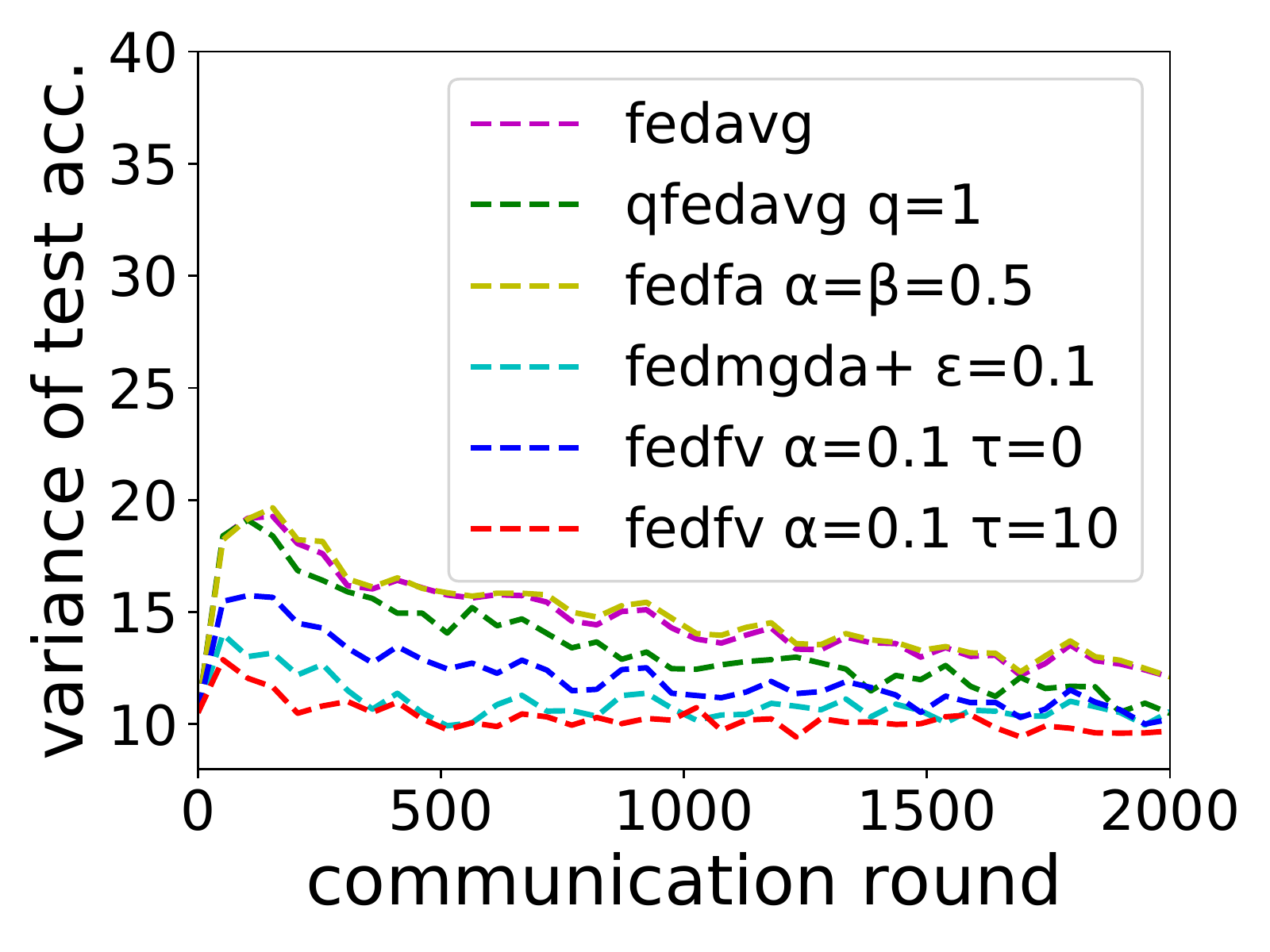}}
\subcaptionbox{MNIST\label{subfig:below}}
    {%
        \includegraphics[width = .5\linewidth]{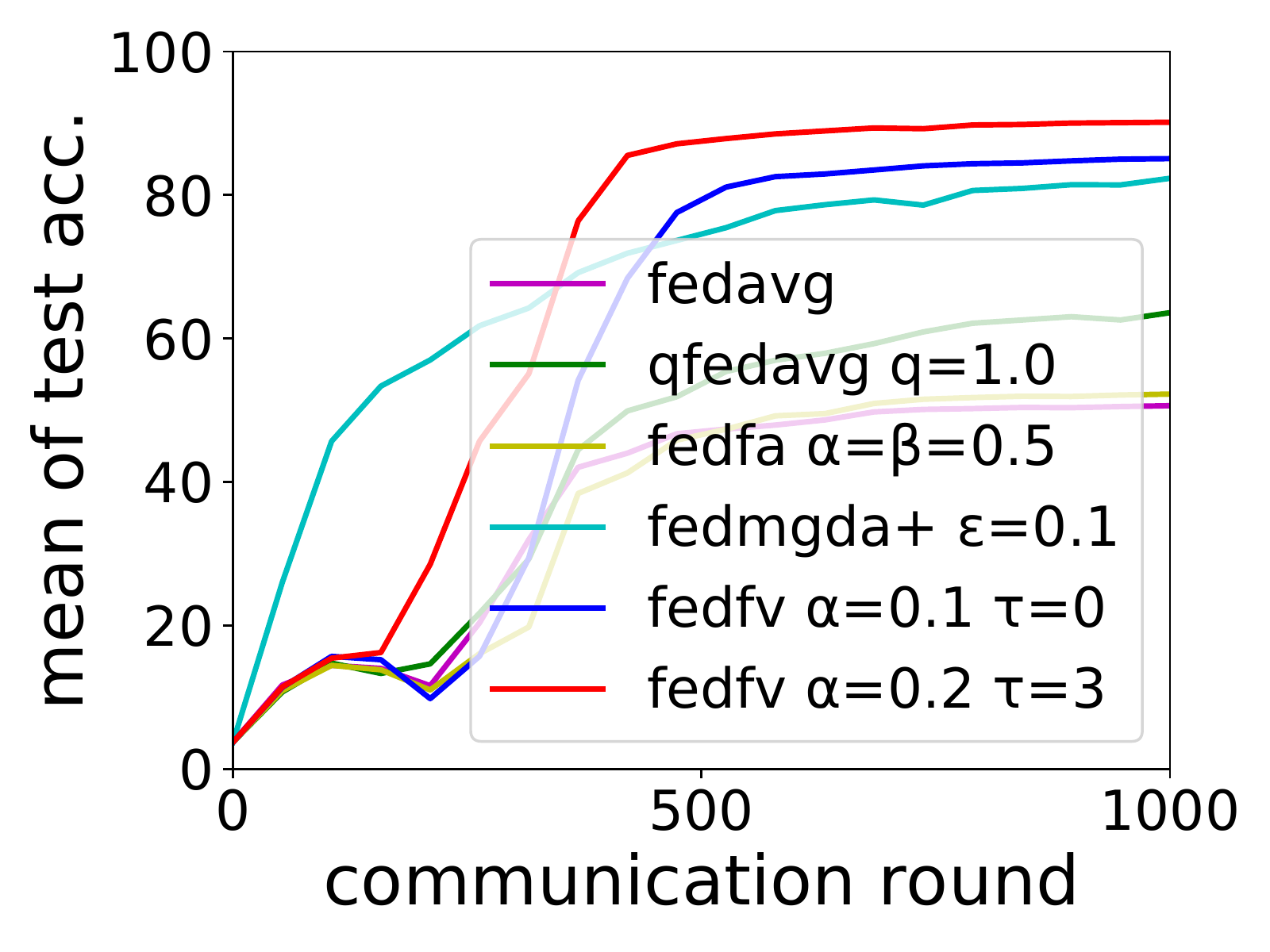}
        \includegraphics[width = .5\linewidth]{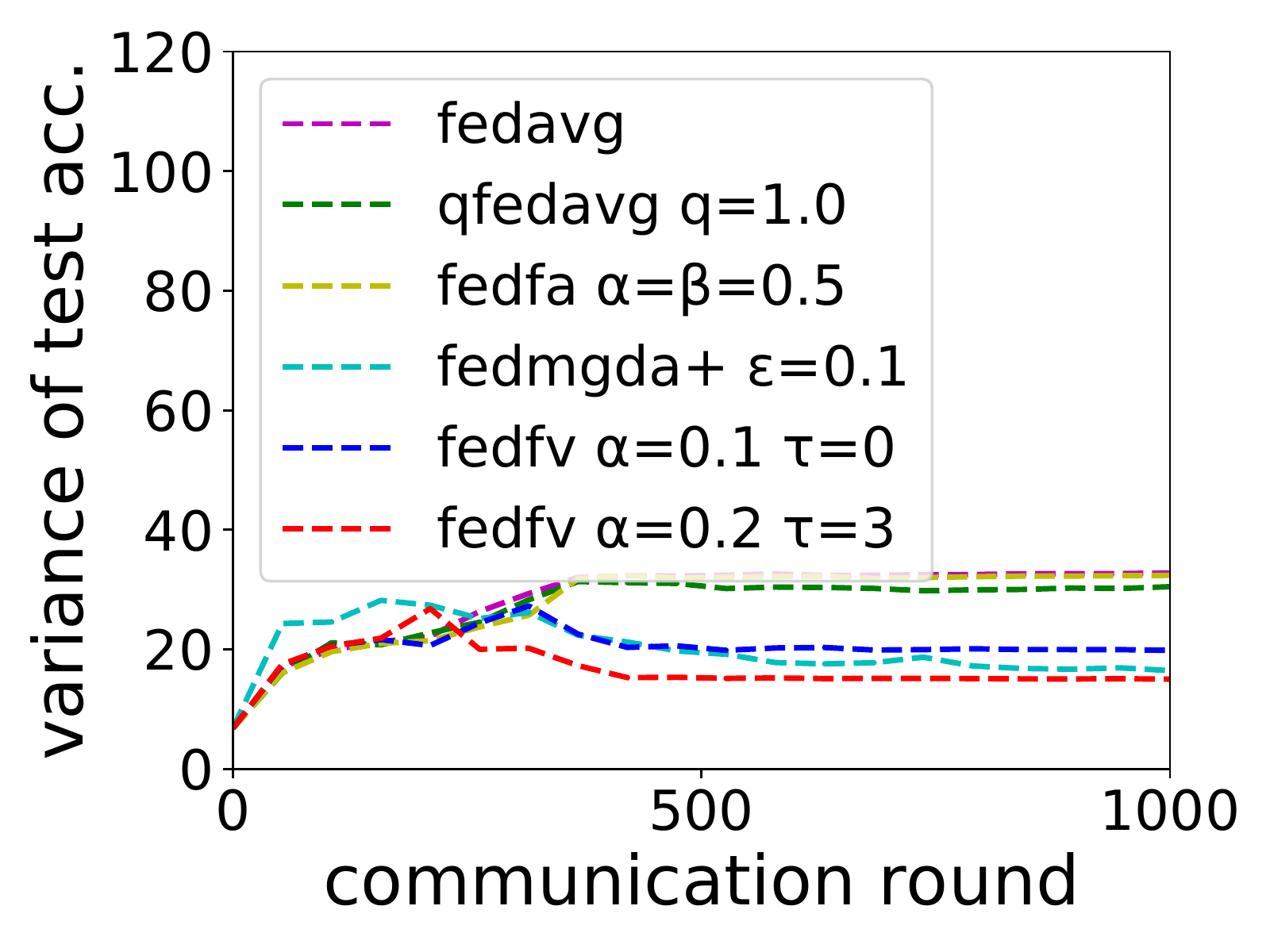}}
\caption{The mean (left) and the variance (right) of test accuracy on all clients on (a) CIFAR-10, and (b) MNIST. The results are averaged over 5 runs with different random seeds.\label{fig:2}}
\end{figure}

% \begin{figure}[htbp]
% \centering
% \subfigure[Number of elements read]
% {\includegraphics[height= 2in,width=2in]{fig2a.eps}}
% \subfigure[Size of disk files scanned] {\includegraphics[height=2in,width=2in]{fig2b.eps}}
% \subfigure[Execution time] 
% {\includegraphics[height=2in,width=2in]{fig2a.eps}}
% \subfigure[Size of disk files scanned] {\includegraphics[height=2in,width=2in]{fig2b.eps}}
% \caption{ PathStack versus TJFast using XMark data }
% \label{fig5}
% \end{figure}
% \begin{figure}[htbp]
%     \subfigure{
%         \includegraphics[width=8.5cm,height=4cm]{fig2.eps}
%         \label{label_for_cross_ref_1}
%     }
%     \subfigure{
%     	\includegraphics[width=8.5cm,height=4cm]{fig2.eps}
%         \label{label_for_cross_ref_3}
%     }
%     \caption{F}
%     \label{Figure2}
% \end{figure}

% \begin{figure}
% \centering
% \includegraphics[width=8.5cm,height=5cm]{fig2.eps}
% \caption{The mean and variance of test accuracy on all clients after 2000 communication rounds on CIFAR-10.} \label{fig1}
% \end{figure}
\subsubsection{Effects of Projecting Order}

To verify the effectiveness of our projecting order, we compare FedFV with another two cases: 1) projecting gradients in a random order of the projection targets; and 2) projecting gradients in an order of the projection targets that is reverse to FedFV, as shown in Table \ref{tab:4}.

For CIFAR-10, 20\% of clients are sampled in each communication round. For Fashion MNIST, all clients are selected in each communication round. We set $\alpha=0,\tau=0$ for all groups to confirm the effectiveness of the projecting order. If we project gradients to the targets in the loss-based order of FedFV, the variance is the lowest. Projecting gradients to the targets in a random order is also fairer than in an order reverse to FedFV, which indicates that the loss-based order used by FedFV helps improve fairness.

\section{Conclusions and Future Work}

 We identified the issue of conflicting gradients with large differences in the magnitudes, which brings unfairness in FL. To address this issue, we propose the \textbf{Fed}erated \textbf{F}air a\textbf{V}eraging (\textbf{FedFV}) algorithm to mitigate the potential conflicts among clients before averaging their gradients. In addition, we show how well FedFV can mitigate the conflicts and how it converges to either a pareto stationary solution or the optimal on convex problems. Extensive experiments on a suite of federated datasets confirm that FedFV compares favorably against state-of-the-art methods in terms of fairness, accuracy and efficiency. In the future, we plan to build a complete theoretical analysis of FedFV on mitigating complex external conflicts.

\begin{appendix}
\appendix
\newtheorem{thm}{Theorem}
\section{Proof}
\subsection{Proof of Theorem 1}\label{A.1}

% \begin{thm}
% Suppose there is a set of gradients $G=\{g_1,g_2,...,g_m\}$ where $g_i$ always conflicts with $g_j^{(t_j)}$ before projecting $g_j^{(t_j)}$ to $g_i$'s normal plane and $g_i^{(t_i)}$ is obtained by projecting $g_i$ to the normal planes of other gradients in $G$ for $t_i$ times. Assuming that $|\cos{<g_i^{(t_i)},g_j^{(t_j)}>}|\le \epsilon, 0\textless\epsilon\le 1 $, for each $g_i \in G$, as long as we iteratively project $g_i$ onto $g_k$'s normal plane (skipping $g_i$ it self) in the ascending order of $k$ where $k=1,2,...,m$, the larger $k$ is, the smaller the upper bound of conflicts between the average $\bar{g}'=\frac 1m\sum_{i=1}^m g_i^{(m)}$ and $g_k$ is.

% \end{thm}
\begin{proof}
For each gradient $g_i\in G$, we project $g_i$ onto $g_k$'s normal plane in an increasing order with $k$. Thus, we have update rules
\begin{align*}
\begin{split}
\left \{
\begin{array}{ll}
    g_i^{(0)}=g_i,&k=0\\
    g_i^{(k)}=g_i^{(k-1)}-\frac{g_i^{(k-1)}\cdot g_k}{||g_k||^2}g_k, &k=1,2,...,m,k\ne i\\
    g_i^{(k)}=g_i^{(k-1)},&k=i
\end{array}
\right.
\end{split}
\end{align*}
Therefore, $g_i^{(k)}$ never conflicts with $g_k$, since all potential conflicts have been eliminated by the updating rules. We then focus on how much the final gradient $\bar{g}'$ conflicts with each gradient in $G$. Firstly, the last in the projecting order is $g_m$, and the final gradient $\bar{g}'$ will not conflict with it. Then, we focus on the last but second gradient $g_{m-1}$. Following the update rules, we have
\begin{equation}
\begin{split}
g_i^{(m)}=g_i^{(m-1)}-\frac{g_i^{(m-1)}\cdot g_m}{||g_m||^2}g_m\\
% &=g_i^{(0)}-\sum_{k=1,k\ne i}^m \frac{g_i^{(k-1)}\cdot g_k}{||g_k||^2}g_k\\
    % &=g_i-\sum_{k=1,k\ne i}^m ||g_i^{(k-1)}||\cos{<g_i^{(k-1)},g_k>}\frac{g_k}{||g_k||}\\
% 
 \end{split}
\end{equation}
Let $\phi_{i,j}^{(k)}$ denote the angle between $g_i^{(k)}$ and $g_j$, and $\phi_{i,j}=\phi_{i,j}^{(0)}$, then the average of projected gradients is: 
\begin{equation}
\begin{split}
&\bar g'=\frac 1m\sum_i^m g_i^{(m)}\\
      &=\frac 1m(\sum_{i\ne m}^m (g_i^{(m-1)}-\frac{g_i^{(m-1)}\cdot g_m}{||g_m||^2}g_m)+g_m^{(m-1)})\\
    %   &=\frac 1m[\sum_i^m g_i^{(m-1)}-\sum_i^m\frac{g_i^{(m-1)}\cdot g_m}{||g_m||^2}g_m]\\
      &=\frac 1m\sum_i^m g_i^{(m-1)}-\frac 1m\sum_{i\ne m}^m ||g_i^{(m-1)}||\cos{\phi_{i,m}^{(m-1)}}\frac{g_m}{||g_m||}
    % &=\frac 1m\sum_i^m[g_i-\sum_{k=1,k\ne i}^m ||g_i^{(k-1)}||\cos{<g_i^{(k-1)},g_k>}\frac{g_k}{||g_k||}]
    % &=\frac 1m\sum_i^m[(1-\frac 1{||g_i||}\sum_{k=1,k\ne i}^m ||g_k^{(i-1)}||\cos{<g_k^{(i-1)},g_i>})g_i]
% 
\end{split}
\end{equation}
Since $g_{m-1}\cdot \sum_i^m g_i^{(m-1)}\ge0$, 
\begin{equation}
\begin{split}
&g_{m-1}\cdot \bar g'\ge g_{m-1}\cdot \frac {-1}m\sum_{i\ne m}^m ||g_i^{(m-1)}||\cos{\phi_{i,m}^{(m-1)}}\frac{g_m}{||g_m||}\\
&=-\frac 1m \sum_{i\ne m}^m ||g_i^{(m-1)}||||g_{m-1}||\cos{\phi_{i,m}^{(m-1)}}\cos{\phi_{m-1,m}}\\
&\ge -\frac{\epsilon^2}{m}||g_{m-1}||\sum_{i\ne m}^m ||g_i^{(m-1)}||
\end{split}
\end{equation}
Similarly, we can compute the conflicts between any gradient $g_k\in G$ and $\bar g'$ by removing the $g_i^{(k)}$ from $\bar g'$
\begin{equation}
\begin{split}
g_k\cdot \bar g'&\ge g_{k}\cdot\sum_{j=k}^{m-1} (\frac {-1}m\sum_{i\ne j+1}^m ||g_i^{(j)}||\cos{\phi_{i,j+1}^{(j)}}\frac{g_{j+1}}{||g_{j+1}||})\\
& = -\frac {||g_k||}m \sum_{j=k}^{m-1} \sum_{i\ne j+1}^m||g_i^{(j)}||\cos{\phi_{i,j+1}^{(j)}}\cos{\phi_{k,j+1}}\\
&\ge  -\frac{\epsilon^2}{m}||g_k|| \sum_{j=k}^{m-1} \sum_{i\ne j+1}^m ||g_i^{(j)}||\\
|e_k\cdot \bar g'|&=|\frac {g_k}{||g_k||}\cdot \bar g'|\le \frac{\epsilon^2}{m}\sum_{j=k}^{m-1} \sum_{i\ne j+1}^m ||g_i^{(j)}||
\end{split}
\end{equation}
Therefore, the later $g_k$ serves as the projecting target of others, the smaller the upper bound of conflicts between $\bar g'$ and $g_k$ is, since $\sum_{j=k}^{m-1} \sum_{i\ne j+1}^m ||g_i^{(j)}||\le\sum_{j=k-1}^{m-1} \sum_{i\ne j+1}^m ||g_i^{(j)}||$.
\end{proof}
\subsection{Proof of Theorem 2}\label{A.2}

% \begin{thm}
% Suppose that there is a set of gradients $G=\{g_1,g_2,...,g_m\}$ where $g_i$ always conflicts with $g_j^{(t_j)}$ before projecting $g_j^{(t_j)}$ to $g_i$'s normal plane and $g_i^{(t_i)}$ is obtained by projecting $g_i$ to the normal planes of different gradients in $G$ for $t_i$ times. If $\epsilon_1 \le|\cos{<g_i^{(t_i)},g_j^{(t_j)}>}|\le \epsilon_2, 0\textless\epsilon_1\le\epsilon_2\le 1 $, then as long as we iteratively project $g_i$ onto $g_k$'s normal plane (skipping $g_i$ it self) in the ascending order of $k$ where $k=1,2,...,m$, the 
% maximum value of $|g_k\cdot \bar{g}'|$ is bounded by $(\max_{i}{||g_i||})^2 f(m,k,\epsilon_1,\epsilon_2)$, where $f(m,k,\epsilon_1,\epsilon_2)=\frac{\epsilon_2^2(1-\epsilon_1^2)^{\frac 12}(1-(1-\epsilon_1^2)^{\frac {m-k}2})}{1-(1-\epsilon_1^2)^{\frac 12}}$.
% \end{thm}
\begin{proof}
According to (5), the upper bound of the conflict between any client gradient $g_k$ and the final average gradient $\bar{g}'$ can be expressed as

\begin{equation}
\begin{split}
|g_k\cdot \bar g'|\le \frac{\epsilon_2^2}{m}||g_k||\sum_{j=k}^{m-1} \sum_{i\ne j+1}^m ||g_i^{(j)}||
\end{split}
\end{equation}
With the update rules of FedFV, we can infer that
\begin{equation}
\begin{split}
||g_k^{(i)}||^2&=||g_k^{(i-1)}-||g_k^{(i-1)}||\cos{\phi_{k,i}^{(i-1)}}\frac{g_i}{||g_i||}||^2\\
    % &=||g_k^{(i-1)}-||g_k^{(i-1)}||\cos{<g_k^{(i-1)},g_i>}e_i||^2\\
    &=||g_k^{(i-1)}||^2-2||g_k^{(i-1)}||^2\cos^2{\phi_{k,i}^{(i-1)}}+\\
    &||g_k^{(i-1)}||^2\cos^2{\phi_{k,i}^{(i-1)}}\\
    &=(1-\cos^2{\phi_{k,i}^{(i-1)}})||g_k^{(i-1)}||^2\\
    &\le (1-\epsilon_1^2)||g_k^{(i-1)}||^2
\end{split}
\end{equation}
Therefore, the maximum value of gradient conflict is bounded by
\begin{equation}
\begin{split}
&|g_k\cdot \bar{g}'|\le \frac{\epsilon_2^2}{m}||g_k|| \sum_{j=k}^{m-1}\sum_{i\ne j+1}^m ||g_i^{(j)}||\\
% & \le \frac{\epsilon_2^2}{m}(\max_{i}{||g_i||}) \sum_{j=1}^{m-1}\sum_i^m ||g_i^{(j)}||\\
& \le  \frac{\epsilon_2^2}{m} (\max_{i}{||g_i||}) \sum_{j=k}^{m-1}\sum_{i\ne j+1}^m ||g_i^{(0)}||(1-\epsilon_1^2)^{\frac j2}\\
& \le \frac{m-1}{m}\epsilon_2^2(\max_{i}{||g_i||})^2 \sum_{j=k}^{m-1} (1-\epsilon_1^2)^{\frac j2}\\
% & = \epsilon_2^2(\max_{i}{||g_i||})^2 (1-\epsilon_1^2)^{\frac 12}\frac{1-(1-\epsilon_1^2)^{\frac {m-k}2}}{1-(1-\epsilon_1^2)^{\frac 12}}\\
&= \frac{m-1}{m}(\max_{i}{||g_i||})^2 \frac{\epsilon_2^2(1-\epsilon_1^2)^{\frac 12}(1-(1-\epsilon_1^2)^{\frac {m-k}2})}{1-(1-\epsilon_1^2)^{\frac 12}}\\
&= \frac{m-1}{m}(\max_{i}{||g_i||})^2 f(m,k,\epsilon_1,\epsilon_2)
\end{split}
\end{equation}
\end{proof}

\subsection{Proof of the Convergence}\label{A.3}
% \begin{thm}
% Assume that there are only two types of users whose objective functions are respectively $F_1(\theta)$ and $F_2(\theta)$, and each objective function is differentiable, $L$-smooth and convex. For the average objective $F(\theta)=\frac 12\sum F_i(\theta)$, FedFV with stepsize $\eta\le\frac 1L$ will converge to either 1) a pareto stationary point, 2) or the optimal $\theta^*$.
% \end{thm}
\begin{proof}[Proof of Theorem 3]
Let $g_1$ and $g_2$ be the two types of users' updates of parameters in the $t$th communication round.
When $F(\theta)$ is $L$-smooth, we have
\begin{equation}
F(\theta^{t+1})\le F(\theta^t)+\nabla F(\theta)^T(\theta^{t+1}-\theta^t)+\frac 12L||\theta^{t+1}-\theta^{t}||_2^2
\end{equation}
There are two cases: conflicting gradients exist or otherwise.
If there is no conflict between $g_1$ and $g_2$, which indicates $g_1\cdot g_2\ge 0$, FedFV updates as FedAvg does, simply computing an average of the two gradients and adding it to the parameters to obtain the new parameters $\theta^{t+1}=\theta^t-\eta \bar g$, where $g=\frac 12(g_1+g_2)$. Therefore, when using step size $\eta \le \frac 1L$, it will strictly decrease the objective function $F(\theta)$ \cite{ontheconvergence}. However, if $g_1\cdot g_2\textless 0$, FedFV will compute the update as:
\begin{equation}
\theta^{t+1}=\theta^t-\eta \bar{g}'=\theta^t-\eta(g_1+g_2-\frac{g_1\cdot g_2}{||g_1||^2}g_1-\frac{g_1\cdot g_2}{||g_2||^2}g_2)
\end{equation}
Combining with (9), we have:
\begin{equation}
\begin{split}
 &F(\theta^{t+1})\le F(\theta^t)-\eta(g_1+g_2)^T(g_1+g_2-\frac{g_1\cdot g_2}{||g_1||^2}g_1-\\
 &\frac{g_1\cdot g_2}{||g_2||^2}g_2)+\frac L2\eta^2||g_1+g_2-\frac{g_1\cdot g_2}{||g_1||^2}g_1-\frac{g_1\cdot g_2}{||g_2||^2}g_2||_2^2\\
%  \end{split}
% \end{equation}
% \begin{equation}
% \begin{split}
%  F(\theta^{t+1})\le
 &=F(\theta^t)-\eta||g_1+g_1||^2+\eta(g_1+g_2)^T(\frac{g_1\cdot g_2}{||g_1||^2}g_1\\
 &+\frac{g_1\cdot g_2}{||g_2||^2}g_2)+\frac L2\eta^2||g_1+g_2-\frac{g_1\cdot g_2}{||g_1||^2}g_1-\frac{g_1\cdot g_2}{||g_2||^2}g_2||_2^2
 \end{split}
\end{equation}
Since $g_1\cdot g_2=||g_1||||g_2||\cos{\phi_{12}}$, where $\phi_{12}$ denotes the angle between $g_1$ and $g_2$, after rearranging the items in this inequality, we have
\begin{equation}
\begin{split}
 F(\theta^{t+1})\le F(\theta^t)-(\eta-\frac L2\eta^2)(1-\cos^2 \phi_{12})(||g_1||^2+\\
 ||g_2||^2)-Lt^2(1-\cos^2 \phi_{12})||g_1||||g_2||\cos \phi_{12}
 \end{split}
\end{equation}
To decrease the objective function, the inequality below should be satisfied
\begin{equation}
\begin{split}
-(\eta-\frac L2\eta^2)(1-\cos^2 \phi_{12})(||g_1||^2+||g_2||^2)-L\eta^2(1-\\
\cos^2 \phi_{12})||g_1||||g_2||\cos \phi_{12} \le 0
 \end{split}
\end{equation}
\begin{equation}
\begin{split}
(\eta-\frac L2\eta^2)(||g_1||^2+||g_2||^2)+\frac L2\eta^22||g_1||||g_2||\cos \phi_{12}\ge 0
 \end{split}
\end{equation}
\begin{equation}
\begin{split}
(\eta-L\eta^2)(||g_1||^2+||g_2||^2)+\frac L2\eta^2(||g_1||^2+||g_2||^2+\\
2||g_1||||g_2||\cos \phi_{12})\ge 0
 \end{split}
\end{equation}
\begin{equation}
\begin{split}
(\eta-L\eta^2)(||g_1||^2+||g_2||^2)+\frac L2\eta^2||g_1+g_2||^2\ge 0
 \end{split}
\end{equation}
With $\eta\le\frac1L$, we have $\eta-L\eta^2\ge0$, which promises the decrease of the objective function. On the other hand, we can infer that
\begin{equation}
\begin{split}
-\eta(1-\frac L2\eta)=\eta(\frac L2\eta-1)\le \eta(\frac 12 -1)=-\frac \eta2
 \end{split}
\end{equation}
Combine (17) with (13), we have:
\begin{equation}
\begin{split}
 &F(\theta^{t+1}) \le  F(\theta^t)-\frac \eta2(1-\cos^2\phi_{12})(||g_1||^2+||g_2||^2)\\
 &-\frac \eta2(1-\cos^2 \phi_{12})2||g_1||||g_2||\cos \phi_{12}\\
 &=F(\theta_{t})-\frac \eta2(1-\cos^2 \phi_{12})(||g_1||^2+||g_2||^2+2||g_1||||g_2||\\
 &\cos \phi_{12})\\
 &=F(\theta_{t})-\frac \eta2(1-\cos^2 \phi_{12})||g_1+g_2||^2\\
 &=F(\theta_{t})-\frac \eta2(1-\cos^2 \phi_{12})||g||^2
 \end{split}
\end{equation}
Therefore, the objective function will always degrade unless 1)$||g||=0$, which indicates it will reach the optimal $\theta^*$, 2)$\cos{\phi_{12}}=-1$, then we can suppose $g_1=-\beta g_2$ where $\beta\textgreater 0$, and we will get
\begin{equation}
\begin{split}
\frac 1{1+\beta}g_1+\frac \beta{1+\beta} g_2=0
 \end{split}
\end{equation}
So $\theta^t$ is a pareto stationary point as defined below:
\begin{definition}
For smooth criteria $F_k(\theta)(1\le k\le K)$, $\theta^0$ is called a Pareto-stationary iff there exists some convex combination of the gradietns {$\nabla F_k(\theta^0)$} that equals zero \cite{mgda}.
\end{definition}
\end{proof}
% \begin{thm}  
% Assume that there are $m$ objective functions $F_1(\theta)$, $F_2(\theta)$,..., $F_m(\theta)$, and each objective function is differentiable, $L$-smooth and convex. If $\cos{<\bar{g},\bar{g}'>}\ge \frac 12$ and $||\bar{g}||\ge||\bar{g}'||$ where $\bar{g}$ is the true gradient for the average objective $F(\theta)=\frac 1m\sum F_i(\theta)$, then FedFV with step size $\eta\le\frac 1L$ will converge to either 1) a pareto stationary point, 2) or the optimal $\theta^*$.
% \end{thm}
\begin{proof}[Proof of Theorem 4]
With the assumption that $F(\theta)$ is $L$-smooth, we can get the inequality:
\begin{equation}
F(\theta^{t+1})\le F(\theta^t)+\nabla F(\theta)^T(\theta^{t+1}-\theta^t)+\frac 12L||\theta^{t+1}-\theta^{t}||_2^2.
\end{equation}
Similar to the proof for Theorem 2, if there exist conflicts, then
\begin{equation}
\begin{split}
F(\theta^{t+1})&\le F(\theta^t)-\eta \bar{g} \cdot \bar{g}'+\frac 12L\eta^2||\bar{g}'||^2\\
&\le F(\theta^t)-\frac\eta 2 ||\bar{g}||||\bar{g}'||+\frac L2 \eta^2 ||\bar{g}'||^2\\
&\le F(\theta^t)-\frac\eta 2 ||\bar{g}||||\bar{g}'||+\frac L2 \eta^2 ||\bar{g}||||\bar{g}'||\\
&\le F(\theta^t)+(-\frac\eta 2 +\frac L2 \eta^2 )||\bar{g}||||\bar{g}'||\\
\end{split}
\end{equation}
Since $(-\frac\eta 2 +\frac L2 \eta^2 )\le 0$ with $\eta\le \frac 1L$, the average objective will always degrade if we repeatedly apply the update rules of FedFV unless 1)$||\bar{g}||=0$, which indicates it will finally reach the optimal $\theta^*$, 2) $||\bar{g}'||=0$, which indicates that all pairs of conflicting gradients have a cosine similarity of $-1$, leading to an existence of convex combination $p$ which satisfies $\sum p_i g_i=0$, since we can easily choose a pair of conflicting gradients $g_i$ and $g_j$ as the proof of theorem 3 does and then set the weights of the rest zero.
\end{proof}
\end{appendix}
\clearpage

\bibliographystyle{named}
% \bibliography{ijcai21}
\end{document}